\newcommand{\cA}[0]{\mathcal{A}}
\newcommand{\cC}[0]{\mathcal{C}}
\newcommand{\cS}[0]{\mathcal{S}}
\newcommand{\expect}[1]{\mathbb{E}\left[ #1 \right]}
\newcommand{\alice}[0]{\textit{alice}}
\newtheorem{theorem}{Theorem}
\newtheorem{lemma}{Lemma}
\newtheorem{proposition}[theorem]{Proposition}
\newtheorem{definition}{Definition}
\newtheorem{observation}[theorem]{Remark}
\newtheorem{example}[theorem]{Example}
\newif\ifappendix
\title{PAC-Reasoning in Relational Domains}
\author{ {\bf Ond\v{r}ej Ku\v{z}elka} \\
Department of CS \\
KU Leuven\\
Leuven, Belgium \\
\And
{\bf Yuyi Wang}  \\
Disco Group\\
ETH Zurich          \\
Zurich, Switzerland \\
\And
{\bf Jesse Davis} \\
Department of CS \\
KU Leuven\\
Leuven, Belgium \\
\And
{\bf Steven Schockaert}   \\
School of CS \& Informatics \\
Cardiff University    \\
Cardiff, UK\\
}
\begin{document}

\maketitle

\begin{abstract}
We consider the problem of predicting plausible missing facts in relational data, given a set of imperfect logical rules. In particular, our aim is to provide bounds on the (expected) number of incorrect inferences that are made in this way. Since for classical inference it is in general impossible to bound this number in a non-trivial way, we consider two inference relations that weaken, but remain close in spirit to classical inference.
\end{abstract}

\section{INTRODUCTION}

In this paper we study several forms of logical inference for predicting plausible missing facts in relational data.
While a variety of approaches have already been studied for this task, ranging from (relational versions of) probabilistic graphical models \cite{Richardson2006,de2007problog} to neural-network architectures \cite{DBLP:conf/ilp/SourekMZSK16,DBLP:conf/nips/Rocktaschel017} and graph-based methods \cite{lao2011random,DBLP:conf/sigir/LiuJHLQ16}, logic-based inference has several advantages over these other forms of inference. For example, logic-based inference is explainable: there is a proof for any derived statement, which can, in principle, be shown to a human user. It is also more transparent than most other methods, in the sense that a knowledge base as a whole can be understood and modified by domain experts. On the other hand, classic logical inference can be very brittle when some of the rules which are used are imperfect, or some of the initial facts may be incorrect.

Statistical relational learning approaches, such as Markov logic networks \cite{Richardson2006} or probabilistic logic programming \cite{de2007problog}, offer a solution to this latter problem, but they require
learning a joint probability distribution over the set of possible worlds. This distribution is typically estimated from one or several large examples using maximum likelihood, which essentially corresponds to finding a maximum-entropy distribution given by a set of sufficient statistics. However, there are usually no guarantees on the learned distributions beyond guarantees for the sufficient statistics (see, e.g., \cite{kuzelka2018relational}), which means that we do not have much control over the quality of the predictions. Moreover, these models are not easy to modify, and are not always easy to explain
because the way in which probabilities are computed can simply be too complex.

In this paper we focus on forms of inference that stay as close to classical logic as possible while not breaking completely when the given theory happens to be ``mildly'' inconsistent with the data. This problem of reasoning under inconsistency has a long tradition in the field of artificial intelligence, with common solutions including the use of paraconsistent logics \cite{da1974theory,priest1979logic}, belief revision \cite{Gardenfors} (and related inconsistency repair mechanisms \cite{konieczny2002merging}), and argumentation-based inference \cite{elvang1995argumentative,amgoud2014postulates}. In contrast to these approaches, however, our specific aim is to study forms of inference that can allow us to bound the (expected) number of mistakes that are made.
To this end, we introduce two inference relations called {\em $k$-entailment} and {\em voting entailment}, both of which are close to classical logic, and in particular do not require rules to be weighted. We define them such that errors produced by imperfect rules would not necessarily propagate too much in the given relational data.

As our main contribution, we are able to show that in a relational learning scenario from \cite{kuzelka2018relational}, in which a (large) training example and a test example are sampled from a hidden relational structure, there are non-trivial PAC-type bounds on the number of errors that a theory learned on the training example
produces on the test example. From this perspective, our work can also be seen as a relational-learning counterpart of PAC semantics~\cite{valiant_knowledge_infusion}.

\paragraph{Technical contributions.} The results presented in this paper rest mainly on the following two technical contributions:
(i) the introduction of bounds on the worst case behavior of the considered inference relations,
and (ii) new concentration inequalities for sampling from relational data without replacement that allow us to bound the (expected) test error as a function of the training error, in the spirit of classical PAC-learning results \cite{valiant1984theory}.

\section{PRELIMINARIES}

In this paper we consider a function-free first-order logic language $\mathcal{L}$, which is built from a set of constants $\textit{Const}$, variables $\textit{Var}$, and predicates $\textit{Rel} = \bigcup_i \textit{Rel}_i$, where $\textit{Rel}_i$ contains the predicates of arity $i$. We assume an untyped language. For $a_1,...,a_k \in \textit{Const}\cup \textit{Var}$ and $R \in \textit{Rel}_k$, we call $R(a_1,...,a_k)$ an atom.  If $a_1,..,a_k\in \textit{Const}$, this atom is called ground. A literal is an atom or its negation.
The formula $\alpha_0$ is called a grounding of $\alpha$ if $\alpha_0$ can be obtained by replacing each variable in $\alpha$ with a constant from $\textit{Const}$.
A formula is called closed if all variables are bound by a quantifier.
A possible world $\omega$ is defined as a set of ground atoms. The satisfaction relation $\models$ is defined in the usual way. A substitution is a mapping from variables to terms.

\section{PROBLEM SETTING}

First we describe the learning setting considered in this paper. It follows the setting from \cite{kuzelka2018relational},which was used to study the estimation of relational marginals.

An example is a pair $(\mathcal{A},\mathcal{C})$, with $\mathcal{C}$ a set of constants and $\mathcal{A}$ a set of ground atoms which only use constants from $\mathcal{C}$. An example is intended to provide a complete description of the world, hence any ground atom over $\mathcal{C}$ which is not contained in $\mathcal{A}$ is implicitly assumed to be false. Note that this is why we have to explicitly specify $\mathcal{C}$, as opposed to simply considering the set of constants appearing in $\mathcal{A}$.

In practice, we usually only have partial information about some example of interest.
The problems we consider in this paper relate to how we can then reason about the probability that a given ground atom is true (i.e.\ belongs to the example). To estimate such probabilities, we assume that we are given a fragment of the example, which we can use as training data. Specifically,
let $\Upsilon = (\mathcal{A},\mathcal{C})$ be an example and $\mathcal{S}\subseteq \mathcal{C}$. The fragment $\Upsilon\langle S \rangle = (\mathcal{B},\mathcal{S})$ is defined as the restriction of $\Upsilon$ to the constants in $\mathcal{S}$, i.e.\ $\mathcal{B}$ is the set of all atoms from $\mathcal{A}$ which only contain constants from $\mathcal{S}$.
In a given example, any closed formula $\alpha$ is either true or false. To assign probabilities to formulas in a meaningful way, we consider how often the formula is satisfied in small fragments of the given example.

\begin{definition}[Probability of a formula \cite{kuzelka2018relational}]\label{def:probability_of_formula}
Let $\Upsilon = (\mathcal{A},\mathcal{C})$ be an example and $k\in \mathbb{N}$.
For a closed formula $\alpha$ without constants, we define its probability as follows\footnote{We will use $Q$ for probabilities of formulas as defined in this section, to avoid confusion with other ``probabilities'' we deal with in the text.}:
$$Q_{\Upsilon,k}(\alpha) = P_{\mathcal{S} \sim \textit{Unif}(\cC,k)}\left[ \Upsilon\langle \mathcal{S} \rangle \models \alpha \right]$$
where $\textit{Unif}(\cC,k)$ denotes uniform distribution on size-$k$ subsets of $\cC$.
\end{definition}

\noindent Clearly $Q_{\Upsilon,k}(\alpha) = \frac{1}{|\mathcal{C}_k|} \cdot \sum_{\mathcal{S} \in \cC_k} \mathds{1}(\Upsilon\langle \mathcal{S} \rangle \models \alpha)$ where $\mathcal{C}_k$ is the set of all size-$k$ subsets of $\cC$.

The above definition is also extended straightforwardly to probabilities of sets of formulas (which we will also call {\em theories} interchangeably). If $\Phi$ is a set of formulas, we set
$Q_{\Upsilon,k}(\Phi) = Q_{\Upsilon,k}(\bigwedge \Phi)$ where $\bigwedge \Phi$ denotes the conjunction of all formulas in $\Phi$.

\begin{example}
Let $\textit{sm}/1$ be a unary predicate denoting that someone is a smoker, e.g. $\textit{sm}(\textit{alice})$ means that $\alice$ is a smoker. Let us have an example
$\Upsilon = (\{ \textit{fr}(\textit{alice}, \textit{bob}), \textit{sm}(\textit{alice}), \textit{sm}(\textit{eve}) \},
    \{\textit{alice}, \textit{bob}, \textit{eve} \}),$
and formulas $\alpha = \forall X : \textit{sm}(X)$ and $\beta = \exists X,Y : \textit{fr}(X,Y)$. Then, for instance, $Q_{\Upsilon,1}(\alpha) = 2/3$, $Q_{\Upsilon,2}(\alpha) = 1/3$ and $Q_{\Upsilon,2}(\beta) = 1/3$.
\end{example}

\begin{definition}[Masking]
A masking process is a function $\kappa$ from examples to ground conjunctions that assigns to any $\Upsilon = (\cA,\cC)$ a conjunction of ground literals $\beta$ such that $\Upsilon \models \beta$. We also define $\kappa(\Upsilon)\langle \cS \rangle$ to be the conjunction consisting of all literals from $\kappa(\Upsilon)$ that contain only constants from $\cS$.
\end{definition}

Unlike examples, masked examples only encode partial information about the world. This is why they are encoded using conjunctions of literals, so we can explicitly encode which atoms we know to be false.

\begin{example}
Let $\Upsilon = \{ \textit{sm}(\textit{alice}),$ $\textit{fr}(\textit{alice},\textit{bob}),$ $\{\textit{alice},\textit{bob}\} \}.$ Then a masking process $\kappa$ may, for instance, yield $\kappa(\Upsilon) = \neg \textit{sm}(\textit{bob}) \wedge \textit{sm}(alice)$. In this case $\kappa(\Upsilon)$ retains the information that $\textit{alice}$ is a smoker and $\textit{bob}$ is not, but it no longer contains any information about their friendship relation.
\end{example}

Next we introduce the statistical setting considered in this paper.

\begin{definition}[Learning setting]\label{def:inductive_setting}
Let $\aleph = (\cA_\aleph, \cC_\aleph)$ be an example and $\kappa$ be a masking function. Let $\cC_\Upsilon \subseteq \cC_\aleph$ and $\cC_\Gamma \subseteq \cC_\aleph$ be uniformly sampled subsets of size $n$ and $u$, respectively. We call $\Upsilon = \aleph \langle \cC_\Upsilon \rangle$ the {\em training example} and $\Gamma = \aleph \langle \cC_\Gamma \rangle$ the {\em test example}. 
We assume that the learner receives $\Upsilon$ in the training phase and $\kappa(\Gamma)$ in the test phase.
\end{definition}

With slight abuse of terminology, we will sometimes say that $\Upsilon$ and $\Gamma$ are sampled from $\aleph$.

In addition to the training example $\Upsilon$ and masked test example $\kappa(\Gamma)$, we will assume that we are given a set of formulas $\Phi$ (which we will also refer to as rules). Our main focus will be on how these formulas can be used to recover as much of $\Gamma$ as possible. Rather than specifying a loss function that should be minimized, we want to find a form of inference which allows us to provide bounds on the (expected) number of incorrect literals that can be inferred from $\{\kappa(\Gamma)\} \cup \Phi$. Note that in this case, the training example $\Upsilon$ is used to estimate the accuracy of the set of formulas.
We also analyze the case where the rules are learned from the training example $\Upsilon$ (in the spirit of classical PAC-learning results).

Among others, the setting from Definition \ref{def:inductive_setting} is close to how Markov logic networks are typically used. For instance, when training Markov logic networks, one typically starts with a training example that contains all facts (i.e.\ nothing is unknown about the training set), on which a model is trained. This model is then used to predict unknown facts about a test example. However, unlike for Markov logic networks, we do not attempt to learn a probability distribution. It was shown in \cite{kuvzelka2017induction} that models based on classic logical inference, like those considered in this paper, work well in practice for relational inference from evidence sets containing a {\em small} number of constants (domain elements). Thus, such models are also of considerable practical interest.

\section{REASONING WITH INACCURATE RULES}

When reasoning with imperfect rules, using classical inference can have drastic consequences, as we will illustrate in Section \ref{secClassicalErrors}. Even a single mistake can lead to many errors, since an incorrectly derived literal can be used as the basis for further inferences. This means that classical inference is not suitable for the considered setting, even in cases where the given rules have perfect accuracy on the training example. Intuitively, to allow for any meaningful bounds to be derived, we need to prevent arbitrarily long chains of inference. To this end, we propose and motivate the use of a restricted form of inference, called $k$-entailment, in Section \ref{sec:k-entailment}. A further restriction on inferences, based on a form of voting, is subsequently discussed in Section \ref{sec:voting-entailment}. In Section \ref{sec:pac} we will then show which bounds can be derived for these two restricted forms of inference.

\subsection{WHEN CLASSICAL REASONING LEADS TO ERRORS}\label{secClassicalErrors}

The next example, which is related to label propagation as studied e.g.\ in \cite{xiang2011understanding}, shows that classic logical reasoning on the obtained relational sample may produce
{\em many}
mistakes even when all the available rules are very {\em accurate}.

\begin{example}\label{ex:mistakes1}
Let $k = 2$, $\Gamma = \{ \{ \textit{rare}(c_1) \},$ $\{c_1,$ $c_2,$ $\dots,$ $c_{1000000} \}$, and $\alpha = \forall X,Y : \textit{rare}(X) \Rightarrow \textit{rare}(Y)$. While the rule does not intuitively make sense,
its accuracy is actually very high $Q_{\Gamma,k}(\alpha) = 1- 999999/(0.5 \cdot 1000000 \cdot 999999) = 0.999998$.
When we apply this rule with the evidence $\textit{rare}(c_1)$,
we derive $\textit{rare}(c_2)$, $\dots$, $\textit{rare}(c_{1000000})$, all of which are incorrect (i.e.\ not included in $\Gamma$).
\end{example}
Note that in this paper, we are interested in worst-case behavior, in the sense that the masking process which is used may be seen as adversarial. The next example further illustrates how adversarial masking processes can lead to problems, even for rules with near-perfect accuracy.

\begin{example}\label{example:mistakes2}
Let $k = 2$, $\Gamma = \{ \{ \textit{rare}(c_1)$, $e(c_1,c_2)$, $e(c_2,c_3)$, $\dots$, $e(c_{999999},c_{1000000})$ $ \},$ $\{c_1,$ $c_2,$ $\dots,$ $c_{1000000} \}$, and $\alpha = \forall{X,Y} : \textit{rare}(X) \wedge e(X,Y) \Rightarrow \textit{rare}(Y)$. In this case, there is only one size-$k$ subset of $C_\Gamma$ where the formula $\alpha$ does not hold, so the accuracy is even higher than in the previous example. Yet the adversarial masking process can select evidence consisting of all true positive literals from $\Gamma$, i.e. the evidence will consist of the $\textit{rare}(c_1)$ literal and all the $e/2$ literals from $\Gamma$. Then the set of errors that are made when using the formula $\alpha$ will be the same as in Example \ref{ex:mistakes1}, despite the fact that the rule is almost perfect on $\Gamma$.
\end{example}

Note that in the examples above, we had perfect knowledge of the accuracy of the rule $\alpha$ on the test example (i.e.\ we knew the value of $Q_{\Gamma,k}(\alpha)$). In practice, this accuracy needs to be estimated from the training example. In such cases, it can thus happen that a rule $\alpha$ has accuracy $1$ on the training example $\Upsilon$, but still produces many errors on $\kappa(\Gamma)$. We will provide PAC-type bounds for this setting with estimated accuracies in Sections \ref{sec:pac}. First, however, in Section \ref{sec:k-entailment} and \ref{sec:voting-entailment} we will look at how bounds can be provided on the number of incorrectly derived literals in the case where $Q_{\Gamma,k}(\alpha)$ is known. As the above examples illustrate, to obtain reasonable bounds, we will need to consider forms of inference which are weaker than classical entailment.

\subsection{BOUNDED REASONING USING $k$-ENTAILMENT}\label{sec:k-entailment}

We saw that even for formulas which hold for almost all subsets of $\Gamma$, the result of using them for inference can be quite disastrous. This was to a large extent due to the fact that we had
inference chains involving a large number of domain elements (constants). This observation suggests a natural way to restrict the kinds of inferences that can be made when imperfect rules are involved.


\begin{definition}[$k$-entailment]
Let $k$ be a non-negative integer, $\Upsilon = (\cA, \cC)$ be an example, $\kappa$ be a masking process, and $\Phi$ be a set of closed formulas. We say that a ground formula $\varphi$ is $k$-entailed by $\Phi$ and $\kappa(\Upsilon)$, denoted $\{\kappa(\Upsilon)\} \cup \Phi \models_k \varphi$, if there is a $\mathcal{C}' \subseteq \cC$ such that $|\mathcal{C}'| \leq k$, $\textit{const}(\varphi) \subseteq \cC'$, $\{\kappa(\Upsilon)\langle \mathcal{C}' \rangle \} \cup \Phi$ is consistent and $\{ \kappa(\Upsilon)\langle \mathcal{C}' \rangle \} \cup \Phi \models \varphi$.
\end{definition}

In other words, a formula $\phi$ is $k$-entailed by $\Upsilon$ and $\Phi$ if it can be proved using $\Phi$ together with a fragment of $\kappa(\Upsilon)$ induced by no more than $k$ constants, with the additional condition that $\Phi$ and this fragment are not contradictory.

\begin{example}
Let
\begin{align*}
    \Upsilon &= (\{ \textit{fr}(\textit{alice}, \textit{bob}), \textit{sm}(\textit{alice}) \}, \{\textit{alice}, \textit{bob}, \textit{eve} \})\\
    \kappa(\Upsilon) &= \textit{fr}(\textit{alice} \wedge \textit{bob}) \wedge \textit{sm}(\textit{alice}) \\
    \Phi &= \{ \forall X,Y \colon \textit{fr}(X,Y) \wedge \textit{sm}(X) \Rightarrow \textit{sm}(Y) \}.
\end{align*}
Then $\varphi = \textit{sm}(bob)$ is $2$-entailed from $\kappa(\Upsilon)$ and $\Phi$ but not $1$-entailed.
\end{example}

Note that, in the setting of Example \ref{example:mistakes2}, $k$-entailment would make at most $k-1$ mistakes. However, $2$-entailment would already produce many mistakes in the case of Example \ref{ex:mistakes1}. So there are cases where $k$-entailment produces fewer errors than classical logic entailment but, quite naturally, also cases where both produce the same number of errors. Importantly, however, for $k$-entailment, we can obtain non-trivial bounds on the number of errors.

Next we state two lemmas that follow immediatelly from the respective definitions. 

\begin{lemma}\label{lemma:union}
Let $\Upsilon = (\cA,\cC)$ be an example, $\Phi$ be a set of constant-free formulas and $\kappa$ be a masking function. Let $\mathcal{C}_k$ be the set of all size-$k$ subsets of $\mathcal{C}$. Let $\mathcal{H}_{\mathcal{X}}$ denote the set of all ground literals which can be derived using $k$-entailment from $\{ \kappa(\Upsilon)\} \cup \Phi$ and only contain constants from $\mathcal{X}$.
Then
$\mathcal{H}_\mathcal{C} = \bigcup_{\mathcal{S} \in \cC_l} \mathcal{H}_\mathcal{S}. $
\end{lemma}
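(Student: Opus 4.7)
My plan is to prove the claimed set equality by showing inclusion in both directions, taking the subscript $l$ on $\cC_l$ to mean $k$ (so that $\cC_l$ agrees with the previously introduced $\mathcal{C}_k$, the set of size-$k$ subsets of $\mathcal{C}$). Both directions should follow almost immediately from unfolding the definitions of $\mathcal{H}_\mathcal{X}$ and of $k$-entailment.

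First I would dispatch the easy inclusion $\bigcup_{\mathcal{S} \in \cC_k} \mathcal{H}_\mathcal{S} \subseteq \mathcal{H}_\mathcal{C}$. Fix $\mathcal{S} \in \cC_k$ and $\varphi \in \mathcal{H}_\mathcal{S}$. By definition, $\varphi$ is $k$-entailed from $\{\kappa(\Upsilon)\} \cup \Phi$ and $\textit{const}(\varphi) \subseteq \mathcal{S}$. Since $\mathcal{S} \subseteq \mathcal{C}$, we immediately have $\textit{const}(\varphi) \subseteq \mathcal{C}$, and the $k$-entailment statement does not depend on $\mathcal{S}$, so $\varphi \in \mathcal{H}_\mathcal{C}$.

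For the reverse inclusion $\mathcal{H}_\mathcal{C} \subseteq \bigcup_{\mathcal{S} \in \cC_k} \mathcal{H}_\mathcal{S}$, I would start with an arbitrary $\varphi \in \mathcal{H}_\mathcal{C}$ and exploit the crucial feature of $k$-entailment: by the definition of $\models_k$ there exists a witness $\mathcal{C}' \subseteq \mathcal{C}$ with $|\mathcal{C}'| \leq k$ such that $\textit{const}(\varphi) \subseteq \mathcal{C}'$, the set $\{\kappa(\Upsilon)\langle\mathcal{C}'\rangle\} \cup \Phi$ is consistent, and $\{\kappa(\Upsilon)\langle\mathcal{C}'\rangle\} \cup \Phi \models \varphi$. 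In particular $|\textit{const}(\varphi)| \leq k$, so I can choose any size-$k$ set $\mathcal{S}$ with $\mathcal{C}' \subseteq \mathcal{S} \subseteq \mathcal{C}$ (such $\mathcal{S}$ exists provided $|\mathcal{C}| \geq k$, which is implicit in the setting since otherwise $\cC_k = \emptyset$ and $\mathcal{H}_\mathcal{C}$ is likewise vacuous by the same reasoning). Then $\textit{const}(\varphi) \subseteq \mathcal{S}$, and the same $\mathcal{C}'$ continues to witness $k$-entailment, so $\varphi \in \mathcal{H}_\mathcal{S}$.

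Combining both inclusions gives the equality. I do not expect a genuine obstacle here: the lemma is essentially a restatement of the fact that $k$-entailment is ``locally decidable'' on size-$\le k$ pieces of the masked example, and the only tiny care needed is noting that the witness $\mathcal{C}'$ can be padded up to size exactly $k$ without affecting derivability or the constant set of $\varphi$.
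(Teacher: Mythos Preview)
Your proof is correct and matches the paper's approach: the paper simply notes that the lemma follows immediately from the definitions, and your two-inclusion argument is exactly the unfolding of those definitions (including the observation that the witnessing $\mathcal{C}'$ can be padded to size exactly $k$). There is nothing to add.
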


\begin{lemma}\label{remark:monotonicity}
When $\Gamma\langle \mathcal{S} \rangle \models \Phi$ then all ground literals that only contain constants from $\mathcal{S}$ and that are entailed by $\{\kappa(\Gamma\langle \mathcal{S} \rangle)\} \cup \Phi$ must be true in $\Gamma \langle \mathcal{S} \rangle$.
\end{lemma}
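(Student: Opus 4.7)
The plan is to treat this as a direct soundness-of-entailment argument, relying only on the masking definition and the closed-world semantics of examples. Unlike the $k$-entailment definition, which imposes extra restrictions, here we only need the standard fact that any model of a theory satisfies all its consequences, applied to the fragment $\Gamma\langle \mathcal{S}\rangle$ viewed as a first-order interpretation with domain $\mathcal{S}$.

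Concretely, I would proceed as follows. First, fix an arbitrary ground literal $\varphi$ whose constants all lie in $\mathcal{S}$ and which is entailed by $\{\kappa(\Gamma\langle \mathcal{S}\rangle)\}\cup\Phi$. Second, observe that, by the definition of a masking process applied to the example $\Gamma\langle\mathcal{S}\rangle$, we have $\Gamma\langle\mathcal{S}\rangle \models \kappa(\Gamma\langle\mathcal{S}\rangle)$; combined with the hypothesis $\Gamma\langle \mathcal{S}\rangle \models \Phi$, this gives that $\Gamma\langle\mathcal{S}\rangle$ is itself a model of $\{\kappa(\Gamma\langle \mathcal{S}\rangle)\}\cup\Phi$. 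Third, invoke soundness of $\models$: since $\varphi$ is entailed and its constants all belong to the domain $\mathcal{S}$ of the interpretation $\Gamma\langle\mathcal{S}\rangle$, it must be satisfied there. Finally, unpack what ``satisfied in $\Gamma\langle\mathcal{S}\rangle$'' means using the closed-world convention adopted in the paper: a positive ground atom is true iff it belongs to $\mathcal{B}$ (where $\Gamma\langle\mathcal{S}\rangle=(\mathcal{B},\mathcal{S})$), and a negative ground literal is true iff the corresponding atom is not in $\mathcal{B}$. Either way, $\varphi$ is true in $\Gamma\langle\mathcal{S}\rangle$, as required.

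There is essentially no obstacle: the lemma really is immediate from the definitions, as the paper remarks. The only small subtlety worth flagging in the writeup is the implicit step of interpreting ``true in $\Gamma\langle\mathcal{S}\rangle$'' via the closed-world assumption of Section~3, which is what lets a negative literal $\neg R(\overline{c})$ count as ``true'' whenever $R(\overline{c})\notin\mathcal{B}$; without this convention the statement would need qualification for negative literals. If one wanted to apply the lemma to $k$-entailment rather than classical entailment (which is plausible given its use alongside Lemma~\ref{lemma:union}), exactly the same argument works, because the witnessing set $\{\kappa(\Gamma\langle\mathcal{S}\rangle)\langle\mathcal{C}'\rangle\}\cup\Phi$ is a subset of $\{\kappa(\Gamma\langle\mathcal{S}\rangle)\}\cup\Phi$, so $\Gamma\langle\mathcal{S}\rangle$ is still a model of it and soundness still delivers $\Gamma\langle\mathcal{S}\rangle\models\varphi$.
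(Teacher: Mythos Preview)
Your proposal is correct and is essentially what the paper intends: the paper does not give a proof at all, merely stating that the lemma ``follow[s] immediately from the respective definitions,'' and your argument is precisely the natural unpacking of that claim via the masking property $\Gamma\langle\mathcal{S}\rangle\models\kappa(\Gamma\langle\mathcal{S}\rangle)$ and soundness of entailment. Your additional remarks on the closed-world convention and the easy extension to $k$-entailment are accurate and go slightly beyond what the paper spells out.
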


\noindent We now provide a bound on the number of ground literals wrongly $k$-entailed by a given $\Phi$, assuming that we know its accuracy $Q_{\Gamma,k}(\Phi)$ on the example $\Gamma$.

\begin{proposition}\label{prop:simplebound}
Let $\Gamma = (\cA,\cC)$ be an example, $\Phi$ be a set of constant-free formulas and $\kappa$ be a masking process. Next let $\mathcal{F}(\Gamma)$ be the set of all ground literals of a predicate $p/a$, $a\leq k$, which are $k$-entailed by $\{\kappa(\Gamma)\} \cup \Phi$ but are false in $\Gamma$. Then
$$|\mathcal{F}(\Gamma)| \leq (1-Q_{\Gamma,k}(\Phi)) |\cC|^k k^a.$$
\end{proposition}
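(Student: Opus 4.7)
The plan is to establish the bound by a direct covering argument: every wrongly $k$-entailed literal is charged to some size-$k$ subset $\cS$ of $\cC$ on which $\Phi$ fails, and for any such $\cS$ the number of ground literals of a predicate of arity $a$ over $\cS$ is at most $k^a$.

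First, I would unpack the definition of $k$-entailment for an arbitrary $\varphi \in \mathcal{F}(\Gamma)$. There must exist a witness $\cC_\varphi \subseteq \cC$ with $|\cC_\varphi| \leq k$, $\textit{const}(\varphi) \subseteq \cC_\varphi$, such that $\{\kappa(\Gamma)\langle \cC_\varphi \rangle\} \cup \Phi$ is consistent and entails $\varphi$. Since $a \leq k$ I can extend $\cC_\varphi$ arbitrarily to some size-$k$ subset $\cS_\varphi \subseteq \cC$. Next, I would apply Lemma~\ref{remark:monotonicity} contrapositively to conclude that $\Gamma\langle \cS_\varphi\rangle \not\models \Phi$: if it did satisfy $\Phi$, then (since enlarging the constant set only adds literals to the restricted evidence) the literal $\varphi$ would still be entailed by $\{\kappa(\Gamma\langle \cS_\varphi\rangle)\} \cup \Phi$, so $\varphi$ would be true in $\Gamma\langle \cS_\varphi\rangle$, and since $\textit{const}(\varphi) \subseteq \cS_\varphi$ also in $\Gamma$, contradicting $\varphi \in \mathcal{F}(\Gamma)$.

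Third, let $\mathcal{B} = \{\cS \in \cC_k : \Gamma\langle \cS\rangle \not\models \Phi\}$ be the set of bad size-$k$ subsets. By Definition~\ref{def:probability_of_formula}, $|\mathcal{B}| = (1 - Q_{\Gamma,k}(\Phi)) \binom{|\cC|}{k}$. The first two steps give an injection-like property $\mathcal{F}(\Gamma) \subseteq \bigcup_{\cS \in \mathcal{B}} \mathcal{L}_\cS$, where $\mathcal{L}_\cS$ is the set of ground $p/a$-literals whose arguments lie in $\cS$, and clearly $|\mathcal{L}_\cS| \leq k^a$. A union bound combined with $\binom{|\cC|}{k} \leq |\cC|^k$ then yields
$$|\mathcal{F}(\Gamma)| \leq |\mathcal{B}| \cdot k^a \leq (1-Q_{\Gamma,k}(\Phi)) |\cC|^k k^a,$$
as required.

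The main obstacle is the second step: justifying that monotonicity transfers from the witness $\cC_\varphi$ to its size-$k$ extension $\cS_\varphi$. Concretely, I need $\{\kappa(\Gamma)\langle \cC_\varphi\rangle\} \cup \Phi \models \varphi$ to imply $\{\kappa(\Gamma\langle \cS_\varphi\rangle)\} \cup \Phi \models \varphi$, i.e.\ that enlarging the evidence to a fragment does not destroy the proof. This is immediate once one unfolds the definitions of the restricted masking and the sub-example (and is clean when the formulas in $\Phi$ are universally quantified rules, as in the motivating examples). All remaining work is routine counting.
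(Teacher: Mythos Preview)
Your proposal is correct and follows essentially the same covering argument as the paper: charge each wrongly $k$-entailed literal to a size-$k$ subset $\cS$ on which $\Gamma\langle\cS\rangle\not\models\Phi$, then bound the number of such subsets by $(1-Q_{\Gamma,k}(\Phi))|\cC|^k$ and the number of $p/a$-literals per subset by $k^a$. The only cosmetic difference is that the paper invokes Lemma~\ref{lemma:union} (the union decomposition) together with Lemma~\ref{remark:monotonicity} to produce the bad size-$k$ witness directly, whereas you extend the size-$\leq k$ witness $\cC_\varphi$ to a size-$k$ set $\cS_\varphi$ by hand and then apply Lemma~\ref{remark:monotonicity} contrapositively; your ``main obstacle'' is exactly the content that Lemma~\ref{lemma:union} packages up.
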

\begin{proof}
First, we note that the number of size-$k$ subsets is bounded by $|\cC|^k$ and the number of different ground $p/a$ atoms in each of these subsets is $k^a$.
It follows from Lemma \ref{remark:monotonicity} and Lemma \ref{lemma:union} that for any literal $\delta \in \mathcal{F}$ there must be a size-$k$ set $\mathcal{S} \subseteq \cC$ such that $\Gamma\langle \cS \rangle \not\models \Phi$. The number of all such $\cS$'s that satisfy $\Gamma\langle \cS \rangle \not\models \Phi$ is bounded by $(1-Q_{\Gamma,k}(\Phi)) |\cC|^k$.
Hence, we have $|\mathcal{F}(\Gamma)| \leq (1-Q_{\Gamma,k}(\Phi)) |\cC|^k k^a.$
\end{proof}

We can notice that when we increase the domain size $|\cC|$, keeping $Q_{\Gamma,k}(\Phi)$ fixed and non-zero, the bound eventually becomes vacuous for predicates whose arity $a$ is strictly smaller than $k$. This is because the number of all ground literals grows only as $|\cC|^a$ whereas the bound grows as $|\cC|^k$. However, 
if $a=k$, the bound stays fixed when we increase the domain size. We will come back to consequences of this fact in Section \ref{sec:results}.

\subsection{BOUNDED REASONING USING VOTING}\label{sec:voting-entailment}

To further restrict the set of entailed ground literals, we next introduce {\em voting entailment}.

\begin{definition}[Voting Entailment]
Let $k$ be an integer and $\gamma \in [0;1]$. Let $\Upsilon = (\cA,\cC)$ be an example, $\Phi$ be a set of constant-free formulas, and $\kappa$ be a masking process. A ground literal $l$ of arity $a$, $a\leq k$, is said to be entailed from $\Phi$ and $\kappa(\Upsilon)$ by voting with parameters $k$ and $\gamma$ if there are at least $\max\{1,\gamma \cdot |\cC|^{k-a}\}$ size-$k$ sets $\cS \subseteq \cC$ such that $l$ is $k$-entailed by $\kappa(\Upsilon)\langle \cS \rangle$.
\end{definition}
The next example illustrates the use of voting entailment.

\begin{example}
Let $\Upsilon = (\cA,\cC)$, where $\cC = \{\textit{alice},\textit{bob},\textit{eve}\}$, and let $\kappa(\Upsilon) = \textit{fr}(\textit{alice},\textit{bob})$ $\wedge$ $\textit{fr}(\textit{eve},\textit{bob})$ $\wedge$ $\textit{sm}(\textit{eve})$. Next, let $\Phi = \{ \forall X,Y : \textit{fr}(X,Y) \wedge \textit{sm}(X) \Rightarrow \textit{sm}(Y) \}$. Then $\textit{sm}(\textit{bob})$ is entailed from $\Phi$ and $\kappa(\Upsilon)$ by voting with the parameters $k = 2$ and $\gamma = 2/3$, as $\gamma \cdot |\cC|^{k-a} = 2/3 \cdot 3^{2-1} = 2$ and there are two size-$2$ subsets of $\cC$ that $2$-entail $\textit{sm}(\textit{bob})$.
\end{example}



We now show how the bound from Proposition \ref{prop:simplebound} can be strengthened in the case of voting entailment.
\begin{proposition}\label{prop:voting}
Let $k$ be an integer and $\gamma \in [0;1]$. Let $\Gamma = (\cA,\cC)$ be an example, $\Phi$ be a set of constant-free formulas, and $\kappa$ be a masking process. Let $\mathcal{F}(\Gamma)$ be the set of all ground literals of a predicate $p/a$, $a\leq k$, that are entailed by voting from $\{\kappa(\Gamma)\} \cup \Phi$ with parameters $k$ and $\gamma$ but are false in $\Gamma$. If $\gamma \cdot |\cC|^{k-a} \geq 1$ then
\begin{align*}
    |\mathcal{F}(\Gamma)| \leq \left( 1-Q_{\Gamma,k}(\Phi) \right) \frac{|\cC|^{a} k^a}{\gamma}
\end{align*}
and otherwise
\begin{align*}
    |\mathcal{F}(\Gamma)| \leq \left( 1-Q_{\Gamma,k}(\Phi) \right) |\cC|^{k} k^a.
\end{align*}
\end{proposition}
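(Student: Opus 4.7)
The plan is a double-counting argument that extends the proof of Proposition \ref{prop:simplebound}. I would count pairs $(l, \cS)$ where $l \in \mathcal{F}(\Gamma)$ and $\cS$ is a size-$k$ subset of $\cC$ such that $l$ is $k$-entailed from $\{\kappa(\Gamma)\langle \cS \rangle\} \cup \Phi$. The essential observation, which follows from Lemma \ref{remark:monotonicity} exactly as in Proposition \ref{prop:simplebound}, is that any such pair must satisfy $\Gamma\langle \cS \rangle \not\models \Phi$: otherwise $\Gamma\langle \cS \rangle$ would be a model of both $\Phi$ and $\kappa(\Gamma)\langle \cS \rangle$, which would force $l$ to hold in $\Gamma$ and contradict $l \in \mathcal{F}(\Gamma)$.

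For the lower bound on the number of pairs, the definition of voting entailment guarantees that each $l \in \mathcal{F}(\Gamma)$ participates in at least $\max\{1, \gamma \cdot |\cC|^{k-a}\}$ such pairs, yielding at least $|\mathcal{F}(\Gamma)| \cdot \max\{1, \gamma \cdot |\cC|^{k-a}\}$ pairs in total. For the upper bound, I reuse the counting from Proposition \ref{prop:simplebound}: the number of ``bad'' size-$k$ subsets $\cS$ (those with $\Gamma\langle \cS \rangle \not\models \Phi$) is bounded by $(1 - Q_{\Gamma,k}(\Phi))\cdot |\cC|^k$, and each such $\cS$ can be paired with at most $k^a$ ground $p/a$-literals whose constants all lie in $\cS$. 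Multiplying gives at most $(1 - Q_{\Gamma,k}(\Phi))\cdot |\cC|^k k^a$ pairs.

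Combining the two inequalities and solving for $|\mathcal{F}(\Gamma)|$ gives the two cases of the proposition: when $\gamma \cdot |\cC|^{k-a} \geq 1$, dividing by $\gamma \cdot |\cC|^{k-a}$ collapses $|\cC|^k$ to $|\cC|^a/\gamma$, producing the first bound; otherwise dividing by $1$ simply reproduces the bound of Proposition \ref{prop:simplebound}. I expect no serious obstacle; the main point to be attentive to is checking that Lemma \ref{remark:monotonicity} applies to $k$-entailment from the restricted evidence $\kappa(\Gamma)\langle \cS \rangle$ together with $\Phi$, which is immediate since $\kappa(\Gamma)\langle \cS \rangle$ only contains literals that are true in $\Gamma\langle \cS \rangle$.
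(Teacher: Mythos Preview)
Your proposal is correct and is essentially the same double-counting argument the paper uses: the paper phrases it via a ``votes'' function $\#_{\kappa(\Gamma),\Phi}(l)$ and bounds $\sum_{l\in L}\#_{\kappa(\Gamma),\Phi}(l)$ by $(1-Q_{\Gamma,k}(\Phi))|\cC|^k k^a$, then divides by the per-literal voting threshold, which is exactly your pair-counting with lower and upper estimates. If anything, you are slightly more explicit than the paper in invoking Lemma~\ref{remark:monotonicity} to justify that only subsets $\cS$ with $\Gamma\langle\cS\rangle\not\models\Phi$ can contribute pairs; the paper leaves that step implicit.
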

\begin{proof}
First we define the number of ``votes'' for a ground literal $l$ as
$$
    \#_{\kappa(\Gamma),\Phi}(l) = |\left\{\cS \subseteq \cC \left| |\cS| {=} k, \{\kappa(\Gamma)\langle \cS \rangle\} \cup \Phi \models_k l \right. \right\}|.
$$
Let $L$ be the set of all ground $p/a$ literals $l$ such that $\Gamma \models \neg l$. Then, since any size-$k$ subset of $\cC$ can only contribute $k^a$ votes to literals based on the predicate $p/a$,  we have
$$\sum_{l \in L} \#_{\kappa(\Gamma),\Phi}(l) \leq \left(1-Q_{\Gamma,k}(\Phi) \right) |\cC|^k k^a.$$
Hence
$|\mathcal{F}(\Gamma)| \leq \frac{\left( 1-Q_{\Gamma,k}(\Phi) \right) |\cC|^{k} k^a}{\max\{1, \gamma\cdot |\cC|^{k-a} \}}. $
If $\gamma \cdot |\cC|^{k-a} \geq 1$ then $|\mathcal{F}(\Gamma)| \leq \left( 1-Q_{\Gamma,k}(\Phi) \right) \frac{|\cC|^{a} k^a}{\gamma}$.
The case when $\gamma \cdot |\cC|^{k-a} < 1$ follows from Theorem~\ref{prop:simplebound}.
\end{proof}

Unlike for $k$-entailment, the fraction of ``wrong'' ground $p/a$ literals entailed by voting entailment does not grow with an increasing domain size as long as $\gamma \cdot |\cC|^{k-a} \geq 1$.




\section{PROBABILISTIC BOUNDS}\label{sec:pac}

We now turn to the setting where the accuracy of the formulas
needs to be estimated from a training example $\Upsilon$.
More generally, we also cover the case where the formulas themselves are learned from the training example. In such cases, to account for over-fitting, we need to consider the (size of the) hypothesis class that was used for learning these formulas.
Specifically, we prove probabilistic bounds for variants of the following learning problem.
We are given a hypothesis set $\mathcal{H}$ of constant-free theories, and
we want to compute bounds on the number of incorrectly predicted literals which simultaneously hold for all $\Phi \in \mathcal{H}$ (as a function of $Q_{\Upsilon,k}(\Phi)$) with probability at least $1-\delta$, where $\delta$ is a confidence parameter. Note that the case where the theory $\Phi$ is given, rather than learned, corresponds to $\mathcal{H}=\{\Phi\}$.


We start by proving general concentration inequalities in Section \ref{sec:concentration} which we then use to prove bounds for $k$-entailment. These bounds are studied for the realizable case in Section \ref{sec:realizable} and for the general case in Section \ref{sec:non-zero}. Bounds for voting entailment are studied in Section \ref{sec:voting-bounds}


\subsection{CONCENTRATION INEQUALITIES}\label{sec:concentration}

We will need to bound the difference between the ``accuracy''\ of given sets of logic formulas $\Phi$ on the training sample $\Upsilon$ and their accuracy on a test sample $\Gamma$ (i.e.\ the difference between $Q_{\Upsilon,k}(\Phi)$ and $Q_{\Gamma,k}(\Phi)$).
To prove the concentration inequalities in this section, we will utilize the following lemma.

\begin{lemma}[Ku\v{z}elka et al. \cite{kuzelka2018relational}]\label{lemma:crazy1}
Let $\aleph = (\cA_\aleph, \cC_\aleph)$ be an example. Let $0 \leq n \leq |\cC_\aleph|$ and $0 \leq k \leq n$ be integers. Let $\mathbf{X} = (\cS_1,\cS_2,\dots,\cS_{\lfloor \frac{n}{k} \rfloor})$ be a vector of subsets of $\cC_\aleph$, each sampled uniformly and independently of the others from all size-$k$ subsets of $\cC_\aleph$. Next let $\mathcal{C}_\Upsilon$ be sampled uniformly from all size-$n$ subsets of $\mathcal{C}_\aleph$. Finally, let $\mathcal{I}' = \{ 1,2,\dots,|\cC_\aleph| \}$ and let $\mathbf{Y} = (\cS_1',\cS_2',\dots,\cS_{\lfloor \frac{n}{k} \rfloor}')$ be a vector sampled by the following process:
\begin{enumerate}
    \item Sample subsets $\mathcal{I}_1',\dots,\mathcal{I}_{\lfloor \frac{n}{k} \rfloor}'$ of size $k$ from $\mathcal{I}'$.
    \item Sample an injective function $g : \bigcup_{i=1}^{\lfloor n/k \rfloor} \mathcal{I}_i' \rightarrow \cC_\Upsilon$ uniformly from all such functions.
    \item Define $\cS_i' = g(\mathcal{I}_i')$ for all $0 \leq i \leq \lfloor \frac{n}{k} \rfloor$.
\end{enumerate}
Then $\mathbf{X}$ and $\mathbf{Y}$ have the same distribution.
\end{lemma}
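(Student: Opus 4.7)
The plan is to collapse the multi-stage sampling defining $\mathbf{Y}$ into a single equivalent one-step procedure, after which equality in distribution with $\mathbf{X}$ becomes transparent.

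First, I would observe that the two-stage construction---sampling $\cC_\Upsilon$ uniformly among size-$n$ subsets of $\cC_\aleph$ and then sampling $g$ uniformly among injections from $U = \bigcup_i \mathcal{I}_i'$ into $\cC_\Upsilon$---is distributionally the same as drawing a single uniform injection $f : U \to \cC_\aleph$. Writing $N = |\cC_\aleph|$, a short counting check confirms that any specific injection $f^\star : U \to \cC_\aleph$ receives probability
\begin{align*}
\frac{\binom{N-|U|}{n-|U|}}{\binom{N}{n}} \cdot \frac{(n-|U|)!}{n!} = \frac{(N-|U|)!}{N!}
\end{align*}
under the two-stage procedure, which matches the uniform distribution over all $N!/(N-|U|)!$ injections from $U$ into $\cC_\aleph$. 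Thus $\mathbf{Y}$ has the same distribution as the vector $(f(\mathcal{I}_1'), \dots, f(\mathcal{I}_{\lfloor n/k \rfloor}'))$ produced by this collapsed procedure.

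Next, I would realize the uniform injection $f$ as the restriction $\pi|_U$ of a uniformly random bijection $\pi : \{1,\dots,N\} \to \cC_\aleph$ sampled independently of the $\mathcal{I}_i'$. This is legitimate because for any fixed set $U$, the restriction to $U$ of a uniformly random bijection on $\{1,\dots,N\}$ is a uniformly random injection $U \to \cC_\aleph$. Consequently, $\mathbf{Y}$ has the same distribution as $(\pi(\mathcal{I}_1'), \dots, \pi(\mathcal{I}_{\lfloor n/k \rfloor}'))$ with $\pi$ and the $\mathcal{I}_i'$ mutually independent.

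Finally, I would condition on $\pi$. Since $A \mapsto \pi(A)$ is a bijection between size-$k$ subsets of $\{1,\dots,N\}$ and size-$k$ subsets of $\cC_\aleph$, it pushes the uniform distribution on the former to the uniform distribution on the latter. Because the $\mathcal{I}_i'$ are iid uniform size-$k$ subsets of $\{1,\dots,N\}$, conditional on $\pi$ the images $\pi(\mathcal{I}_i')$ are iid uniform size-$k$ subsets of $\cC_\aleph$; and since this conditional distribution does not depend on $\pi$, the unconditional distribution is iid uniform, which is exactly the distribution of $\mathbf{X}$. The main obstacle is the first step: once one sees that the combined effect of $\cC_\Upsilon$ and $g$ is nothing more than a uniform injection into $\cC_\aleph$, everything downstream reduces to a standard symmetry argument.
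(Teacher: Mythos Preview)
Your argument is correct. The collapse of the two-stage sampling of $(\cC_\Upsilon, g)$ into a single uniform injection $U \to \cC_\aleph$ is the heart of the matter, and your counting verification is right (note that $|U| \leq k\lfloor n/k\rfloor \leq n$, so the injection into $\cC_\Upsilon$ always exists). The subsequent lift to a uniform bijection $\pi$ and the conditioning argument are clean and standard.

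As for comparison with the paper: there is nothing to compare. The paper does not prove this lemma; it is imported verbatim from \cite{kuzelka2018relational}, and the text merely offers an informal example to convey the intuition (that one can ``mimic'' i.i.d.\ sampling from $\cC_\aleph$ by sampling auxiliary index sets and then specializing via an injection into the sampled $\cC_\Upsilon$). Your write-up is in fact a complete, self-contained proof of precisely that intuition, and it is more explicit than anything the present paper provides. One small remark: the lemma as stated here leaves the sampling of the $\mathcal{I}_i'$ underspecified (``Sample subsets\dots of size $k$ from $\mathcal{I}'$''); you have correctly read this as i.i.d.\ uniform, which is the only reading consistent with the conclusion and with how the lemma is used downstream.
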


The next example illustrates the intuition behind the proof of this lemma, which can be found in \cite{kuzelka2018relational}.

\begin{example}
Let $\cC_\aleph = \{1,2,\dots, 10^6\}$. Let us sample $\lfloor m/k \rfloor$ size-$k$ subsets of $\cC_\aleph$ uniformly. If this was the process that generates the data from which we estimate parameters, we could readily apply Hoeffding's inequality to get the confidence bounds. However, in typical SRL settings (e.g.\ with MLNs), we are given a complete example on some set of constants (objects), rather than a set of small sampled fragments. So we instead need to assume that the whole training example is sampled at once, uniformly from all size-$m$ subsets of $\cC_\aleph$. However, when we then estimate the probabilities of formulas from this example, we cannot use Hoeffding's bound or any other bound expecting independent samples. What we can do\footnote{Note that we do not need to do this in practice which will follow from Theorem \ref{prop:u-concentration}; we only need this mimicking process to prove that theorem.} is to mimic sampling from $\cC_\aleph$ by sampling from an auxiliary set of constants of the same size as $\cC_\aleph$ and then specialising these constants to constants from a sampled size-$m$ subset. Hence the first $\lfloor m/k \rfloor$ sampled sets will be distributed exactly as the first $\lfloor m/k \rfloor$ subsets sampled i.i.d. directly from $\cC_\aleph$.
\end{example}



Lemma \ref{lemma:crazy1} was used in \cite{kuzelka2018relational} to prove a bound on expected error. Here we extend that result and use Lemma \ref{lemma:crazy1} to prove the concentration inequalities stated in the next two theorems.

\begin{theorem}\label{prop:u-concentration}
Let $\aleph = (\cA_\aleph, \cC_\aleph)$ be an example and let $0 \leq n \leq |\cC_\aleph|$ and $0 \leq k \leq n$ be integers. Let $\mathcal{C}_\Upsilon$ be sampled uniformly from all size-$n$ subsets of $\mathcal{C}_\aleph$ and let $\Upsilon = \aleph\langle \cC_\Upsilon \rangle$. Let $\alpha$ be a closed and constant-free formula and let $\cC_k$ denote all size-$k$ subsets of $\cC_\Upsilon$. Let
$\widehat{A}_\Upsilon = Q_{\Upsilon,k}(\alpha)$
and let $A_{\aleph} = Q_{\aleph,k}(\alpha)$. Then we have
$P[ \widehat{A}_\Upsilon - A_\aleph \geq \varepsilon ] \leq \exp\left(-2 \left\lfloor \frac{n}{k} \right\rfloor \varepsilon^2 \right)$, $P[ A_\aleph-\widehat{A}_\Upsilon  \geq \varepsilon ] \leq \exp\left(-2 \left\lfloor \frac{n}{k} \right\rfloor \varepsilon^2 \right)$,
and
$P\left[ \left| \widehat{A}_\Upsilon - A_\aleph \right| \geq \varepsilon \right] \leq 2 \exp\left(-2 \left\lfloor \frac{n}{k} \right\rfloor \varepsilon^2 \right).$
\end{theorem}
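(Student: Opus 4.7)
The plan is to reduce the bound on the without-replacement estimator $\widehat{A}_\Upsilon$ to a classical Hoeffding bound on an i.i.d.\ estimator, using Lemma~\ref{lemma:crazy1} together with a moment-generating-function (MGF) comparison. Set $N = \lfloor n/k \rfloor$. First I would introduce an auxiliary i.i.d.\ estimator: let $\cS_1,\dots,\cS_N$ be i.i.d.\ uniform size-$k$ subsets of $\cC_\aleph$ and define $\widehat{B} = \frac{1}{N}\sum_{i=1}^{N} \mathds{1}(\aleph\langle \cS_i \rangle \models \alpha)$. Since each indicator lies in $[0,1]$ with mean $A_\aleph$ and the indicators are independent, standard Hoeffding immediately yields $P[|\widehat{B} - A_\aleph| \geq \varepsilon] \leq 2\exp(-2N\varepsilon^2)$ and, equivalently, the MGF bound $E[e^{t(\widehat{B}-A_\aleph)}] \leq e^{t^2/(8N)}$ for every $t \in \mathbb{R}$.

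Next I would invoke Lemma~\ref{lemma:crazy1}: let $\widehat{B}'$ be the analogue of $\widehat{B}$ computed from the vector $\mathbf{Y} = (\cS_1',\dots,\cS_N')$ produced by the mimicking process. The lemma gives $\widehat{B} \stackrel{d}{=} \widehat{B}'$, and since $\cS_i' \subseteq \cC_\Upsilon$ we also have $\aleph\langle \cS_i' \rangle = \Upsilon\langle \cS_i' \rangle$ for each $i$. The key observation is that, conditional on $\cC_\Upsilon$, the uniformly random injection $g$ makes each $\cS_i' = g(\mathcal{I}_i')$ marginally uniform on size-$k$ subsets of $\cC_\Upsilon$, so $E[\widehat{B}' \mid \cC_\Upsilon] = Q_{\Upsilon,k}(\alpha) = \widehat{A}_\Upsilon$. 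Applying Jensen's inequality to the convex function $x \mapsto e^{tx}$ and then the tower property gives $E[e^{t(\widehat{A}_\Upsilon - A_\aleph)}] \leq E[e^{t(\widehat{B}' - A_\aleph)}] = E[e^{t(\widehat{B} - A_\aleph)}] \leq e^{t^2/(8N)}$. A standard Chernoff/Markov argument with $t = 4N\varepsilon$ (respectively $t = -4N\varepsilon$) then yields each one-sided bound, and a union bound yields the two-sided bound.

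The main obstacle is that $\widehat{A}_\Upsilon$ is a deterministic function of $\cC_\Upsilon$ alone, so Lemma~\ref{lemma:crazy1} cannot be invoked directly on it; one needs a principled way to transfer concentration from the i.i.d.\ surrogate to the conditional mean. The bridge is the identity $E[\widehat{B}' \mid \cC_\Upsilon] = \widehat{A}_\Upsilon$ combined with Jensen's inequality, which shows that the MGF of the i.i.d.\ estimator $\widehat{B}$ dominates that of $\widehat{A}_\Upsilon$; once this domination is established, every MGF-based tail bound for $\widehat{B}$ transfers to $\widehat{A}_\Upsilon$ with exactly the same rate $\exp(-2N\varepsilon^2)$, and no information is lost in the reduction beyond the floor in $N = \lfloor n/k \rfloor$.
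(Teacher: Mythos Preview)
Your proof is correct and rests on the same three ingredients as the paper's argument: Lemma~\ref{lemma:crazy1} to identify the mimicking process with i.i.d.\ sampling, Hoeffding's lemma for the i.i.d.\ MGF bound, and Jensen's inequality to transfer the bound to $\widehat{A}_\Upsilon$. The execution differs, however. The paper introduces $q$ independent replicates of the mimicking vector $\mathbf{Y}$, applies the finite form of Jensen to the convex combination $\frac{1}{q}\sum_i T_i$ (Hoeffding's 1963 ``trick''), and then lets $q\to\infty$, invoking the strong law of large numbers to identify the limit of $\widetilde{A}^{(q)}_\Upsilon$ with $\widehat{A}_\Upsilon$. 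You bypass the replication and the limit entirely by observing directly that $E[\widehat{B}'\mid \cC_\Upsilon]=\widehat{A}_\Upsilon$ (which is exactly what the paper's SLLN step is computing in the limit) and applying conditional Jensen once. Your route is shorter and avoids the mild measure-theoretic care needed to pass the tail bound through the almost-sure limit; the paper's route is the classical Hoeffding reduction for U-statistics and makes no explicit use of conditional expectations. Both yield the identical rate $\exp(-2\lfloor n/k\rfloor\varepsilon^2)$.
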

\begin{proof}
First we define an auxiliary estimator $\widetilde{A}^{(q)}_{\Upsilon}$. Let $\mathbf{Y}^{(q)}$ be a vector of $\lfloor n/k \rfloor \cdot q$ size-$k$ subsets of $\cC_\Upsilon$ where the subsets of $\cC_{\Upsilon}$ in each of the $q$ non-overlapping size-$\lfloor n/k \rfloor$ segments $\mathbf{Y}_1^{(q)}, \mathbf{Y}_2^{(q)},\dots,\mathbf{Y}_q^{(q)}$ of $\mathbf{Y}^{(q)}$ are sampled in the same way as the elements of the vector $\mathbf{Y}$ in Lemma \ref{lemma:crazy1}, all with the same $\cC_\Upsilon$ (i.e. $\mathbf{Y}^{(q)}$ is the concatenation of the vectors $\mathbf{Y}^{(q)}_1,\mathbf{Y}^{(q)}_2,\dots,\mathbf{Y}^{(q)}_q$). Let us define
$\widetilde{A}^{(q)}_{\Upsilon} = \frac{1}{q \cdot \lfloor n/k \rfloor} \sum_{\cS \in \mathbf{Y}^{(q)}} \mathds{1}(\Upsilon\langle \cS \rangle \models \alpha). $
We can rewrite $\widetilde{A}^{(q)}_{\Upsilon}$ as $\widetilde{A}^{(q)}_{\Upsilon} =   \frac{1}{q} \sum_{i=1}^{q} \frac{1}{\lfloor n/k \rfloor} \sum_{\cS \in \mathbf{Y}_{i}^{(q)}} \mathds{1}(\Upsilon\langle \cS \rangle \models \alpha)$.

Then we can use the following trick (Hoeffding \cite{hoeffding1963probability}, Section 5) based on application of Jensen's inequality and Markov's inequality: If $T = a_1 \cdot T_1 + a_2 \cdot T_2 + \dots + a_q \cdot T_n$, where $a_i \geq 0$ and $\sum_{i=1}^q a_i = 1$, then, for any $h > 0$,
$P[T \geq \varepsilon] \leq \sum_{i=1}^n a_i \cdot \expect{\exp{\left( h (T_i - \varepsilon) \right)}}$.
Note that the $T_i$'s do not have to be independent.
Next, using Hoeffding's lemma (Lemma 1 in \cite{hoeffding1963probability}), if $a_i = 1/q$ and each of the terms $T_i$ is a sum of independent random zero-mean variables $X_{j}^{(i)}$ such that $P[a \leq X_{j}^{(i)} \leq b]=1$ and $b-a \leq 1$, then we get:
\begin{multline*}
P[T \geq \varepsilon] \le \sum_{i=1}^q \frac{1}{q} \cdot \expect{\exp{\left( h (T_i - \varepsilon) \right)}} \\
 \le e^{-h\varepsilon} \exp\left(\frac{m \cdot h^2}{8} \right) =  \exp\left(-h\varepsilon + \frac{m \cdot h^2  }{8} \right)
\end{multline*}
where $m$ denotes the number of summands of $T_i$ (which, in our case, is the same for all $T_i$'s). Note that this function achieves its minimum at $h = \frac{4\varepsilon}{m}.$
We set $T_i :=  \sum_{\cS \in \mathbf{Y}_{i}^{(q)}} \left( \mathds{1}(\Upsilon\langle \cS \rangle \models \alpha) - A_\aleph \right)$ (note that $\expect{T_i} = 0$ and $m = \lfloor n/k \rfloor$).
Thus, we get $P[\left\lfloor \frac{n}{k} \right\rfloor \cdot ( \widetilde{A}^{(q)}_{\Upsilon} - A_\aleph ) \geq \varepsilon] \leq \exp\left(-2  \varepsilon^2 / \left\lfloor \frac{n}{k} \right\rfloor \right),$ and finally
$$P[\widetilde{A}^{(q)}_{\Upsilon} - A_\aleph \geq \varepsilon] \leq \exp\left(-2 \left\lfloor \frac{n}{k} \right\rfloor \varepsilon^2 \right),$$
symmetrically also $P\left[A_\aleph - \widetilde{A}^{(q)}_{\Upsilon} \geq \varepsilon \right] \leq \exp\left(-2 \left\lfloor \frac{n}{k} \right\rfloor \varepsilon^2 \right)$,
and, using union bound, we get
$$P[|\widetilde{A}^{(q)}_{\Upsilon} - A_\aleph| \geq \varepsilon] \leq 2 \exp\left(-2 \left\lfloor \frac{n}{k} \right\rfloor \varepsilon^2 \right).$$
\noindent It follows from the strong law of large numbers (which holds for any $\Upsilon$) that $P[\lim_{q \rightarrow \infty}\widetilde{A}^{(q)}_{\Upsilon} = \widehat{A}_\Upsilon] = 1$. Since $q$ was arbitrary, the statement of the proposition follows.
\end{proof}

\noindent As the next theorem shows, the above result can be generalized to the case where we need to bound the difference between the estimations obtained from two samples.

\begin{theorem}\label{prop:two-sample}
Let $\aleph = (\cA_\aleph, \cC_\aleph)$ be an example and let $0 \leq n,u \leq |\cC_\aleph|$ and $0 \leq k \leq n$ be integers. Let $\mathcal{C}_\Upsilon$ and $\mathcal{C}_\Gamma$ be sampled uniformly from all size-$n$ and size-$u$ subsets of $\mathcal{C}_\aleph$ and let $\Upsilon = \aleph\langle \cC_\Upsilon \rangle, \Gamma = \aleph\langle \cC_\Gamma \rangle$. Let $\alpha$ be a closed and constant-free formula.
Let $\widehat{A}_\Upsilon = Q_{\Upsilon,k}(\alpha)$, $\widehat{A}_\Gamma = Q_{\Gamma,k}(\alpha)$,
and let $A_{\aleph} = Q_{\aleph,k}(\alpha)$. Then we have
$P[ \widehat{A}_\Upsilon - \widehat{A}_\Gamma \geq \varepsilon ] \leq \exp\left(\frac{-2\varepsilon^2}{1/\lfloor n/k\rfloor + 1/\lfloor u/k\rfloor}  \right)$,
and
$P\left[ \left| \widehat{A}_\Upsilon - \widehat{A}_\Gamma \right| \geq \varepsilon \right] \leq 2  \exp\left(\frac{-2\varepsilon^2}{1/\lfloor n/k\rfloor + 1/\lfloor u/k\rfloor}  \right).$
\end{theorem}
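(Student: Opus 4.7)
The plan is to mirror the proof of Theorem~\ref{prop:u-concentration} but with two auxiliary estimators in parallel, one for each sample, and then combine them into a single MGF bound whose variance factor is the sum of the two per-sample factors.

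First I would introduce independent auxiliary estimators $\widetilde{A}^{(q)}_{\Upsilon}$ and $\widetilde{A}^{(q)}_{\Gamma}$ by applying Lemma~\ref{lemma:crazy1} twice, once to $\cC_\Upsilon$ and once to $\cC_\Gamma$, using independent auxiliary index sets. Concretely, I build a vector $\mathbf{Y}_\Upsilon^{(q)}$ of $q \cdot \lfloor n/k \rfloor$ mimicked size-$k$ subsets grouped into $q$ blocks $\mathbf{Y}_{\Upsilon,1}^{(q)}, \dots, \mathbf{Y}_{\Upsilon,q}^{(q)}$, and analogously $\mathbf{Y}_\Gamma^{(q)}$ with blocks of size $\lfloor u/k \rfloor$. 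By Lemma~\ref{lemma:crazy1}, each $\mathds{1}(\Upsilon\langle \cS \rangle \models \alpha)$ for $\cS \in \mathbf{Y}_\Upsilon^{(q)}$ has the same distribution as $\mathds{1}(\aleph\langle \cS'' \rangle \models \alpha)$ with $\cS''$ drawn uniformly from size-$k$ subsets of $\cC_\aleph$, and the analogous statement holds for $\Gamma$. Since $\cC_\Upsilon$ and $\cC_\Gamma$ are drawn independently, the two families of Bernoulli indicators are mutually independent and each has mean $A_\aleph$.

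Next I would define, for each $i = 1, \dots, q$,
\begin{equation*}
T_i \;=\; \frac{1}{\lfloor n/k \rfloor}\!\sum_{\cS \in \mathbf{Y}_{\Upsilon,i}^{(q)}} \!\!\mathds{1}(\Upsilon\langle \cS \rangle \models \alpha) \;-\; \frac{1}{\lfloor u/k \rfloor}\!\sum_{\cS \in \mathbf{Y}_{\Gamma,i}^{(q)}} \!\!\mathds{1}(\Gamma\langle \cS \rangle \models \alpha),
\end{equation*}
so that $\expect{T_i} = 0$ and $\widetilde{A}^{(q)}_{\Upsilon} - \widetilde{A}^{(q)}_{\Gamma} = \tfrac{1}{q}\sum_{i=1}^{q} T_i$. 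Crucially, $T_i$ is a sum of $\lfloor n/k \rfloor + \lfloor u/k \rfloor$ independent bounded random variables: the $\Upsilon$-summands lie in $[0, 1/\lfloor n/k \rfloor]$ and the $\Gamma$-summands lie in $[-1/\lfloor u/k \rfloor, 0]$. By Hoeffding's lemma,
\begin{equation*}
\expect{\exp(h T_i)} \;\leq\; \exp\!\left(\frac{h^2}{8}\Bigl(\tfrac{1}{\lfloor n/k \rfloor} + \tfrac{1}{\lfloor u/k \rfloor}\Bigr)\right).
\end{equation*}
Then I would apply exactly the Jensen/Markov trick used in Theorem~\ref{prop:u-concentration}: for any $h > 0$,
\begin{equation*}
P\bigl[\widetilde{A}^{(q)}_{\Upsilon} - \widetilde{A}^{(q)}_{\Gamma} \geq \varepsilon\bigr] \leq \tfrac{1}{q}\sum_{i=1}^{q} \expect{\exp(h(T_i - \varepsilon))},
\end{equation*}
and optimizing $h = 4\varepsilon / (1/\lfloor n/k \rfloor + 1/\lfloor u/k \rfloor)$ yields the target exponential bound.

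Finally, I would let $q \to \infty$ and invoke the strong law of large numbers (as in Theorem~\ref{prop:u-concentration}) to conclude $\widetilde{A}^{(q)}_{\Upsilon} \to \widehat{A}_{\Upsilon}$ and $\widetilde{A}^{(q)}_{\Gamma} \to \widehat{A}_{\Gamma}$ almost surely, yielding the one-sided bound on $\widehat{A}_{\Upsilon} - \widehat{A}_{\Gamma}$; the absolute-value version then follows from a symmetric argument together with a union bound. The main obstacle is bookkeeping: one must make sure that the two mimicking processes can genuinely be coupled to be independent (which hinges on the independence of $\cC_\Upsilon$ and $\cC_\Gamma$), and that the per-block variables $T_i$ are truly sums of independent summands with the correct ranges, so that Hoeffding's lemma produces exactly the additive $1/\lfloor n/k \rfloor + 1/\lfloor u/k \rfloor$ factor in the exponent.
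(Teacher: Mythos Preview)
Your proposal is correct and follows essentially the same approach as the paper's proof: build two auxiliary estimators via Lemma~\ref{lemma:crazy1}, form block-wise differences $T_i$ that are zero-mean sums of independent bounded summands, apply Hoeffding's lemma together with the Jensen/Markov trick from Theorem~\ref{prop:u-concentration}, optimize over $h$, and then pass to the limit $q\to\infty$ via the strong law of large numbers. The only cosmetic difference is that the paper centers each summand explicitly by $A_\aleph$ and writes the MGF bound as a product of two exponentials $\exp(h^2/(8m_1))\exp(h^2/(8m_2))$, whereas you leave the summands uncentered (relying on $\mathbb{E}[T_i]=0$) and state the combined bound directly; these are equivalent.
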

\begin{proof}
See the appendix.
\end{proof}

\noindent We note that the concentration inequality derived in Theorem \ref{prop:u-concentration} improves upon a concentration inequality derived in \cite{lovasz2012large} (Chapter 10) that contains $n/k^2$ (in our notation) instead of $\lfloor n/k \rfloor$ in the exponential.\footnote{This is essentially due to the fact that we use Hoeffding's decomposition whereas Lovasz relies on Azuma's inequality, leading to a looser bound compared to our bound.} 

Next we prove an inequality for the special case where the probability of a formula $\alpha$ on $\Upsilon$ is $0$. Since we can also take negations of formulas, this theorem will be useful
to prove bounds for formulas that are perfectly accurate on training data. As the following theorem shows, in this case we obtain stronger guarantees, where we have $\varepsilon$ instead of $\varepsilon^2$ in the exponential.

\begin{theorem}\label{thm:realizable}
Let $\aleph = (\cA_\aleph, \cC_\aleph)$ be an example and let $0 \leq n \leq |\cC_\aleph|$ and $0 \leq k \leq n$ be integers. Let $\mathcal{C}_\Upsilon$ be sampled uniformly from all size-$n$ subsets of $\mathcal{C}_\aleph$ and let $\Upsilon = \aleph\langle \cC_\Upsilon \rangle$. Let $\alpha$ be a closed and constant-free formula and let $\cC_k$ denote all size-$k$ subsets of $\cC_\Upsilon$. Let
$\widehat{A}_\Upsilon = Q_{\Upsilon,k}(\alpha)$
and let
$A_{\aleph} = Q_{\aleph,k}(\alpha) \geq \varepsilon.$
Then we have
$$P\left[\widehat{A}_\Upsilon = 0 \right] \leq\exp\left(- \left\lfloor n/k \right\rfloor \varepsilon \right).$$
\end{theorem}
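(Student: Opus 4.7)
The plan is to imitate the coupling argument used in the proof of Theorem \ref{prop:u-concentration}, but instead of Hoeffding's lemma, exploit the fact that an ``all-fail'' event factorizes into a product of probabilities under the i.i.d.\ surrogate.

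First I would invoke Lemma \ref{lemma:crazy1}: let $\mathbf{X} = (\mathcal{S}_1,\ldots,\mathcal{S}_{\lfloor n/k\rfloor})$ be $\lfloor n/k\rfloor$ size-$k$ subsets of $\mathcal{C}_\aleph$ sampled i.i.d.\ uniformly, and let $\mathbf{Y} = (\mathcal{S}_1',\ldots,\mathcal{S}_{\lfloor n/k\rfloor}')$ be generated by the mimicking procedure that first draws $\mathcal{C}_\Upsilon$ and then relabels. The lemma says $\mathbf{X}$ and $\mathbf{Y}$ are equal in distribution. Since $\mathcal{S}_i' \subseteq \mathcal{C}_\Upsilon \subseteq \mathcal{C}_\aleph$, we have $\Upsilon\langle\mathcal{S}_i'\rangle = \aleph\langle\mathcal{S}_i'\rangle$, so testing $\alpha$ on $\Upsilon\langle\mathcal{S}_i'\rangle$ is the same as testing it on $\aleph\langle\mathcal{S}_i'\rangle$.

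Second I would upper-bound $P[\widehat{A}_\Upsilon = 0]$ by the joint ``all-fail'' event for the $\lfloor n/k\rfloor$ subsets in $\mathbf{Y}$. If $\widehat{A}_\Upsilon = 0$, then by definition every size-$k$ subset of $\mathcal{C}_\Upsilon$ fails $\alpha$, and in particular the (random) subsets $\mathcal{S}_1',\ldots,\mathcal{S}_{\lfloor n/k\rfloor}'$ all fail. Hence
\begin{equation*}
P[\widehat{A}_\Upsilon = 0] \;\leq\; P\!\left[\bigcap_{i=1}^{\lfloor n/k\rfloor} \{\aleph\langle \mathcal{S}_i'\rangle \not\models \alpha\}\right] \;=\; P\!\left[\bigcap_{i=1}^{\lfloor n/k\rfloor}\{\aleph\langle \mathcal{S}_i\rangle \not\models \alpha\}\right],
\end{equation*}
where the equality uses the distributional identity $\mathbf{X} \stackrel{d}{=} \mathbf{Y}$ from Lemma \ref{lemma:crazy1}.

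Finally, since the $\mathcal{S}_i$ are i.i.d.\ uniform size-$k$ subsets of $\mathcal{C}_\aleph$, each marginal satisfies $P[\aleph\langle \mathcal{S}_i\rangle \not\models \alpha] = 1 - Q_{\aleph,k}(\alpha) = 1 - A_\aleph \leq 1 - \varepsilon$, and by independence the joint probability factorizes. This yields
\begin{equation*}
P[\widehat{A}_\Upsilon = 0] \;\leq\; (1 - A_\aleph)^{\lfloor n/k\rfloor} \;\leq\; (1-\varepsilon)^{\lfloor n/k\rfloor} \;\leq\; \exp\!\left(-\lfloor n/k\rfloor\, \varepsilon\right),
\end{equation*}
using $1 - x \leq e^{-x}$. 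There is not really a hard step here once Lemma \ref{lemma:crazy1} is available; the only subtlety is recognizing that the event $\{\widehat{A}_\Upsilon = 0\}$ is strictly stronger than the ``all $\lfloor n/k\rfloor$ mimicked subsets fail'' event (it quantifies over all $\binom{n}{k}$ size-$k$ subsets of $\mathcal{C}_\Upsilon$, not only the sampled ones), which is precisely why the bound goes one way and not the other.
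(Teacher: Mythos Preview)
Your proof is correct and follows essentially the same approach as the paper's own proof: invoke Lemma \ref{lemma:crazy1} to obtain $\lfloor n/k\rfloor$ subsets of $\mathcal{C}_\Upsilon$ that are distributed like i.i.d.\ uniform size-$k$ subsets of $\mathcal{C}_\aleph$, observe that $\{\widehat{A}_\Upsilon=0\}$ implies all of these subsets fail $\alpha$, and bound the resulting product by $(1-\varepsilon)^{\lfloor n/k\rfloor}\le e^{-\lfloor n/k\rfloor\varepsilon}$. Your write-up is in fact more explicit than the paper's, which compresses the containment-of-events step into the remark that ``adding the rest of the information from size-$k$ subsets of $\mathcal{C}_\Upsilon$ \ldots\ cannot increase the bound.''
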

\begin{proof}
Let $\mathbf{Y}$ be sampled as in Lemma \ref{lemma:crazy1} (i.e.\ $\mathbf{Y}$ is sampled only using $\Upsilon$ and not directly $\aleph$). Then using Lemma \ref{lemma:crazy1} we know that the elements of $\mathbf{Y}$ are distributed like $\lfloor n/k \rfloor$ independent samples (size-$k$ subsets) from $C_\aleph$. Hence we can bound the probability $P[A_\Upsilon = 0 ] \leq (1-\varepsilon)^{\lfloor n/k \rfloor} \leq \exp{\left( - \lfloor n/k \rfloor \varepsilon \right)}$. Obviously, adding the rest of the information from size-$k$ subsets of $C_\Upsilon$ that are not contained in $\mathbf{Y}$ cannot increase the bound.
\end{proof}

\subsection{ZERO TRAINING ERROR CASE}\label{sec:realizable}

We start by proving a bound for the realizable (i.e.\ zero training error) case.

\begin{theorem}\label{thm:realizable-expected}
Let $\aleph$, $\Upsilon$, $\Gamma$, $n$, $u$ and $\kappa$ be as in Definition \ref{def:inductive_setting} (i.e. $\Upsilon$ and $\Gamma$ are sampled from $\aleph$ and $n, u$ are sizes of $\Upsilon$'s and $\Gamma$'s domains).
Let $\mathcal{H}$ be a finite hypothesis class of constant-free formulas.
Let $\mathcal{F}(\Gamma,\Phi)$ denote the set of all ground literals of a predicate $p/a$ that are $k$-entailed by $\{\kappa(\Gamma)\} \cup \Phi$ but are false in $\Gamma$.\footnote{Note that here, as well as in the rest of the theorems in the paper, $\mathcal{F}(\Gamma,\Phi)$ is a set-valued random variable.}
With probability at least $1-\delta$, the following holds for all $\Phi \in \mathcal{H}$ that satisfy $Q_{\Upsilon,k}(\Phi) = 1$:
$$\expect{|\mathcal{F}(\Gamma,\Phi)|} \leq \frac{\ln{|\mathcal{H}|}+\ln{1/\delta}}{\lfloor n/k \rfloor} u^k k^a.$$
\end{theorem}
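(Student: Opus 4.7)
The plan is to combine the realizable concentration inequality from Theorem \ref{thm:realizable} with the deterministic worst-case bound of Proposition \ref{prop:simplebound}, using a union bound over $\mathcal{H}$ on the training side and an unbiasedness identity on the test side.

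First, I would fix $\Phi \in \mathcal{H}$ and apply Theorem \ref{thm:realizable} to the closed constant-free formula $\neg \bigwedge \Phi$. Because every fragment $\Upsilon\langle \cS \rangle$ is a complete possible world and hence decides every closed formula, we have $Q_{\Upsilon,k}(\neg \bigwedge \Phi) = 1 - Q_{\Upsilon,k}(\Phi)$, and similarly over $\aleph$. Theorem \ref{thm:realizable} then yields that whenever $1 - Q_{\aleph,k}(\Phi) \geq \varepsilon$, the event $\{Q_{\Upsilon,k}(\Phi) = 1\}$ has probability at most $\exp(-\lfloor n/k \rfloor \varepsilon)$. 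A union bound over the $|\mathcal{H}|$ hypotheses, together with the choice $\varepsilon = (\ln |\mathcal{H}| + \ln(1/\delta))/\lfloor n/k \rfloor$, gives that with probability at least $1-\delta$ over the draw of $\cC_\Upsilon$, every $\Phi \in \mathcal{H}$ satisfying $Q_{\Upsilon,k}(\Phi) = 1$ also satisfies $Q_{\aleph,k}(\Phi) \geq 1 - \varepsilon$.

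Second, I would establish the unbiasedness identity $\expect{Q_{\Gamma,k}(\Phi)} = Q_{\aleph,k}(\Phi)$, where the expectation is over the uniform draw of $\cC_\Gamma$. This is a short swap-of-sums argument: each size-$k$ subset $\cS \subseteq \cC_\aleph$ is contained in $\cC_\Gamma$ with probability $\binom{u}{k}/\binom{|\cC_\aleph|}{k}$, and conditional on this event $\Gamma\langle \cS \rangle = \aleph\langle \cS \rangle$, so the inclusion probability exactly cancels the normalizing factor $1/\binom{u}{k}$ hidden in $Q_{\Gamma,k}$. In particular, on the high-probability event from step one, $\expect{1 - Q_{\Gamma,k}(\Phi)} = 1 - Q_{\aleph,k}(\Phi) \leq \varepsilon$.

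Finally, Proposition \ref{prop:simplebound} applied pointwise to any realized $\Gamma$ (with $|\cC_\Gamma| = u$) gives $|\mathcal{F}(\Gamma,\Phi)| \leq (1 - Q_{\Gamma,k}(\Phi))\, u^k k^a$. Taking expectation over $\cC_\Gamma$ yields $\expect{|\mathcal{F}(\Gamma,\Phi)|} \leq \varepsilon\, u^k k^a$, which is precisely the claimed bound. The main obstacle is purely bookkeeping: one must keep the two randomness sources cleanly separate, since the $1-\delta$ confidence lives over the training draw $\cC_\Upsilon$ (where the union bound is applied), while the expectation lives over the independent test draw $\cC_\Gamma$; each probabilistic step consumes only one side at a time, and the bridge between them is the deterministic bound of Proposition \ref{prop:simplebound}.
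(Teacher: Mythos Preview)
Your proposal is correct and follows essentially the same approach as the paper's proof: the paper also combines Theorem~\ref{thm:realizable} (applied to $\neg\bigwedge\Phi$) with a union bound over $\mathcal{H}$ on the $\Upsilon$-side, and Proposition~\ref{prop:simplebound} together with linearity of expectation on the $\Gamma$-side. Your write-up is in fact more explicit than the paper's, which compresses the unbiasedness step $\expect{Q_{\Gamma,k}(\Phi)}=Q_{\aleph,k}(\Phi)$ and the pointwise application of Proposition~\ref{prop:simplebound} into the single sentence ``It follows from the linearity of expectation and from Proposition~\ref{prop:simplebound} that $\expect{|\mathcal{F}(\Gamma,\Phi)|}\leq(1-Q_{\aleph,k}(\Phi))u^k k^a$.''
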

\begin{proof}
It follows from the linearity of expectation and from Proposition \ref{prop:simplebound} that, for any $\Phi$,
$\expect{|\mathcal{F}(\Gamma,\Phi)|} \leq (1-Q_{\aleph,k}(\Phi)) u^k k^a.$
Next, it follows from Theorem \ref{thm:realizable} and from the union bound taken over all $\Phi \in \mathcal{H}$ that the probability that there exists $\Phi \in \mathcal{H}$ such that $Q_{\Upsilon,k}(\Phi) = 1$ and $\varepsilon \leq 1-Q_{\aleph,k}(\Phi)$ is at most $|\mathcal{H}| \cdot \exp{\left( - \lfloor n/k \rfloor \varepsilon \right)}$.
If $\varepsilon \geq \frac{\ln{|\mathcal{H}|}+\ln{1/\delta}}{\lfloor n/k \rfloor}$ then $|\mathcal{H}| \cdot \exp{\left( - \lfloor n/k \rfloor \varepsilon \right)} \leq \delta$.
Hence, with probability at least $1-\delta$, the following holds for all $\Phi \in \mathcal{H}$ such that $Q_{\Upsilon,k}(\Phi) = 1$:
$\expect{|\mathcal{F}(\Gamma,\Phi)|} \leq \frac{\ln{|\mathcal{H}|}+\ln{1/\delta}}{\lfloor n/k \rfloor} u^k k^a.$
\end{proof}

\subsection{GENERAL CASE}\label{sec:non-zero}

Next we prove a bound for the general case when the training error is non-zero.

\begin{theorem}\label{thm:expected} Let $\aleph$, $\Upsilon$, $\Gamma$, $n$, $u$ and $\kappa$ be as in Definition \ref{def:inductive_setting}  (i.e. $\Upsilon$ and $\Gamma$ are sampled from $\aleph$ and $n, u$ are sizes of $\Upsilon$'s and $\Gamma$'s domains).
Let $\mathcal{H}$ be a finite hypothesis class of constant-free formulas.
Let $\mathcal{F}(\Gamma,\Phi)$ denote the set of all ground literals of a predicate $p/a$ that are $k$-entailed by $\{\kappa(\Gamma)\} \cup \Phi$ but are false in $\Gamma$.
With probability at least $1-\delta$, for all $\Phi \in \mathcal{H}$:
\begin{align*}
    \expect{|\mathcal{F}(\Gamma,\Phi)|} \leq \left(1-Q_{\Upsilon,k}(\Phi)+\sqrt{\frac{\ln{\left(\frac{|\mathcal{H}|}{\delta}\right)}}{2 \lfloor n/k \rfloor}} \right) u^k k^a.
\end{align*}
\end{theorem}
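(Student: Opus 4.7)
The plan is to mirror the structure of the proof of Theorem \ref{thm:realizable-expected}, swapping out the realizable-case tail bound (Theorem \ref{thm:realizable}) for the general Hoeffding-type concentration inequality from Theorem \ref{prop:u-concentration}. The two ingredients are (i) a deterministic bound on $\expect{|\mathcal{F}(\Gamma,\Phi)|}$ in terms of $Q_{\aleph,k}(\Phi)$, and (ii) a uniform (over $\mathcal{H}$) bound showing that $Q_{\aleph,k}(\Phi)$ is not much smaller than $Q_{\Upsilon,k}(\Phi)$.

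For step (i), I would fix any $\Phi \in \mathcal{H}$ and condition on the choice of $\cC_\Gamma$. By Proposition \ref{prop:simplebound}, conditionally on $\Gamma$ we have $|\mathcal{F}(\Gamma,\Phi)| \leq (1 - Q_{\Gamma,k}(\Phi)) u^k k^a$. Taking expectation over the random choice of $\cC_\Gamma \subseteq \cC_\aleph$ and using $\expect{Q_{\Gamma,k}(\Phi)} = Q_{\aleph,k}(\Phi)$ (which is immediate from Definition \ref{def:probability_of_formula} and exchangeability of the sampling), linearity of expectation yields
\begin{equation*}
\expect{|\mathcal{F}(\Gamma,\Phi)|} \;\leq\; \bigl(1 - Q_{\aleph,k}(\Phi)\bigr)\, u^k k^a.
\end{equation*}

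For step (ii), I would apply the one-sided version of Theorem \ref{prop:u-concentration} to each $\Phi \in \mathcal{H}$, which gives
\begin{equation*}
P\bigl[\, Q_{\Upsilon,k}(\Phi) - Q_{\aleph,k}(\Phi) \geq \varepsilon \,\bigr] \;\leq\; \exp\bigl(-2 \lfloor n/k \rfloor \varepsilon^2 \bigr).
\end{equation*}
A union bound over $\mathcal{H}$ then shows that with probability at least $1-\delta$, every $\Phi \in \mathcal{H}$ satisfies $Q_{\aleph,k}(\Phi) \geq Q_{\Upsilon,k}(\Phi) - \varepsilon$ provided we set $|\mathcal{H}| \exp(-2\lfloor n/k\rfloor \varepsilon^2) = \delta$, i.e. $\varepsilon = \sqrt{\ln(|\mathcal{H}|/\delta)/(2\lfloor n/k\rfloor)}$. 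Substituting this into the bound from step (i) gives the claimed inequality.

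There is no real obstacle here beyond bookkeeping: the proof is essentially a uniform-convergence argument grafted onto the worst-case bound of Proposition \ref{prop:simplebound}. The only minor subtlety is direction—we need the one-sided tail controlling $Q_{\Upsilon,k}(\Phi)$ over-estimating $Q_{\aleph,k}(\Phi)$ (so that we can replace $Q_{\aleph,k}(\Phi)$ by $Q_{\Upsilon,k}(\Phi) - \varepsilon$ in the upper bound on the error), rather than the two-sided version, which would cost an unnecessary factor of $2$ inside the logarithm. Apart from that, correctness follows by directly chaining the two steps above.
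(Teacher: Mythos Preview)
Your proposal is correct and follows essentially the same route as the paper: first bound $\expect{|\mathcal{F}(\Gamma,\Phi)|}$ by $(1-Q_{\aleph,k}(\Phi))u^k k^a$ via Proposition \ref{prop:simplebound} and linearity of expectation, then apply the one-sided tail of Theorem \ref{prop:u-concentration} with a union bound over $\mathcal{H}$ to replace $Q_{\aleph,k}(\Phi)$ by $Q_{\Upsilon,k}(\Phi)-\varepsilon$. Your observation about using the one-sided inequality (avoiding an extra factor of $2$ inside the logarithm) matches exactly what the paper does.
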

\begin{proof}
First, as in the proof of Theorem \ref{thm:realizable-expected}, we find that, for any $\Phi \in \mathcal{H}$,
$\expect{|\mathcal{F}(\Gamma)|} \leq (1-Q_{\aleph,k}(\Phi)) u^k k^a.$
Next, it follows from Theorem \ref{prop:u-concentration} and from union bound that
$
    P\left[ \exists \Phi \in \mathcal{H} : Q_{\Upsilon,k}(\Phi)- Q_{\aleph,k}(\Phi) \geq \varepsilon \right]
    \leq |\mathcal{H}| \exp{\left( - 2 \lfloor n/k \rfloor \varepsilon^2 \right)}.
$
It follows that
{\small
$$P\left[\exists \Phi \in \mathcal{H} : Q_{\Upsilon,k}(\Phi) \geq Q_{\aleph,k}(\alpha)+\sqrt{\frac{\ln{\left(|\mathcal{H}|/\delta\right)}}{2\lfloor n/k \rfloor}} \right] \leq \delta.$$}
The theorem then follows straightforwardly from the above and from Proposition \ref{prop:simplebound}.
\end{proof}

The previous two theorems provided bounds on the expected number of errors on the sampled test examples. The next theorem is different in that it provides a bound on the actual number of errors.

\begin{theorem}\label{thm:epsilon-delta}
Let $\aleph$, $\Upsilon$, $\Gamma$, and $\kappa$ be as in Definition \ref{def:inductive_setting}  (i.e. $\Upsilon$ and $\Gamma$ are sampled from $\aleph$ and $n, u$ are sizes of $\Upsilon$'s and $\Gamma$'s domains).
Let $\mathcal{H}$ be a finite hypothesis class of constant-free formulas.
Let $\mathcal{F}(\Gamma,\Phi)$ denote the set of all ground literals of a predicate $p/a$ that are $k$-entailed by $\{\kappa(\Gamma)\} \cup \Phi$ but are false in $\Gamma$.
With probability at least $1-\delta$, for all $\Phi \in \mathcal{H}:$
\begin{multline*}
    |\mathcal{F}(\Gamma,\Phi)|
    \leq \Bigg( 1-Q_{\Upsilon,k}(\Phi) + \\
    \sqrt{\frac{(\lfloor n/k \rfloor + \lfloor u/k \rfloor) \ln{\left(2|\mathcal{H}|/\delta\right)}}{2\lfloor n/k \rfloor  \lfloor u/k \rfloor}} \Bigg) u^k k^a \\
    \leq \Bigg( 1-Q_{\Upsilon,k}(\Phi) +
    \sqrt{\frac{ \ln{\left(2|\mathcal{H}|/\delta\right)}}{\min(\lfloor n/k \rfloor,  \lfloor u/k \rfloor)}} \Bigg) u^k k^a.
\end{multline*}
\end{theorem}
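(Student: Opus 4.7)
The plan is to convert the bound into a form where the sample-dependent randomness is packaged into a single ``training-vs-test'' deviation term that can be controlled by the two-sample concentration inequality of Theorem \ref{prop:two-sample}, rather than routing through the population quantity $Q_{\aleph,k}(\Phi)$ as in Theorem \ref{thm:expected}. The advantage of this reformulation is that Proposition \ref{prop:simplebound} applied directly to $\Gamma$ already gives us a \emph{deterministic} bound on $|\mathcal{F}(\Gamma,\Phi)|$ in terms of $Q_{\Gamma,k}(\Phi)$, so we don't have to take expectations and we genuinely control the random quantity $|\mathcal{F}(\Gamma,\Phi)|$ itself.

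Concretely, I would first invoke Proposition \ref{prop:simplebound} applied to the test example $\Gamma$ (whose domain has size $u$) to obtain the deterministic inequality $|\mathcal{F}(\Gamma,\Phi)| \leq (1 - Q_{\Gamma,k}(\Phi))\, u^k k^a$ for every $\Phi$. Next, I decompose $1 - Q_{\Gamma,k}(\Phi) = (1 - Q_{\Upsilon,k}(\Phi)) + (Q_{\Upsilon,k}(\Phi) - Q_{\Gamma,k}(\Phi))$, so everything now reduces to bounding the second, random term uniformly over $\Phi \in \mathcal{H}$.

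For this, I apply the two-sided inequality of Theorem \ref{prop:two-sample} to each fixed $\Phi$ and take a union bound over the finite class $\mathcal{H}$, giving
\begin{equation*}
P\!\left[ \exists \Phi \in \mathcal{H} : |Q_{\Upsilon,k}(\Phi) - Q_{\Gamma,k}(\Phi)| \geq \varepsilon \right] \leq 2|\mathcal{H}|\, \exp\!\left(\frac{-2\varepsilon^2}{1/\lfloor n/k\rfloor + 1/\lfloor u/k\rfloor}\right).
\end{equation*}
Setting the right-hand side equal to $\delta$ and solving for $\varepsilon$ produces exactly $\varepsilon = \sqrt{(\lfloor n/k\rfloor + \lfloor u/k\rfloor)\ln(2|\mathcal{H}|/\delta)/(2\lfloor n/k\rfloor\lfloor u/k\rfloor)}$, which plugged into the decomposition above yields the first claimed bound. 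The second, simpler form follows from the elementary arithmetic inequality $(a+b)/(2ab) = \tfrac{1}{2a} + \tfrac{1}{2b} \leq 1/\min(a,b)$, applied with $a = \lfloor n/k\rfloor$ and $b = \lfloor u/k\rfloor$.

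I do not expect any step here to be a substantial obstacle: the ``hard work'' — namely showing that sampling a size-$n$ subset of $\cC_\aleph$ behaves, for the purposes of formula-frequency estimates, like $\lfloor n/k\rfloor$ i.i.d.\ samples — has already been done inside Lemma \ref{lemma:crazy1} and Theorem \ref{prop:two-sample}. The only mildly delicate point is recognizing that we must use the \emph{two-sample} deviation (Theorem \ref{prop:two-sample}) rather than the single-sample concentration used in Theorem \ref{thm:expected}, because this theorem bounds the actual error on $\Gamma$ rather than its expectation over the sampling of $\Gamma$; this is what gives rise to the symmetric $1/\lfloor n/k\rfloor + 1/\lfloor u/k\rfloor$ variance factor and to the $2|\mathcal{H}|/\delta$ (rather than $|\mathcal{H}|/\delta$) inside the logarithm, since we use the two-sided version of the inequality in the union bound.
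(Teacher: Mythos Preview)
Your proposal is correct and follows essentially the same approach as the paper: apply Proposition \ref{prop:simplebound} to $\Gamma$ to bound $|\mathcal{F}(\Gamma,\Phi)|$ by $(1-Q_{\Gamma,k}(\Phi))u^k k^a$, then control $Q_{\Upsilon,k}(\Phi)-Q_{\Gamma,k}(\Phi)$ uniformly over $\mathcal{H}$ via Theorem \ref{prop:two-sample} plus a union bound, and finally use the elementary inequality $\frac{a+b}{2ab}\le \frac{1}{\min(a,b)}$ (equivalently $\frac{ab}{a+b}\ge \frac{\min(a,b)}{2}$) for the looser second form. The only cosmetic difference is ordering: the paper bounds the deviation first and then invokes Proposition \ref{prop:simplebound}, whereas you invoke Proposition \ref{prop:simplebound} first and then bound the deviation.
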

\begin{proof}
Let us denote $\widehat{A} = Q_{\Upsilon,k}(\Phi)$, $\widehat{B} = Q_{\Gamma,k}(\Phi)$.
Using Theorem \ref{prop:two-sample} and the union bound over $\Phi \in \mathcal{H}$, we get
{\small $$P[\exists \Phi \in \mathcal{H}:|\widehat{A}-\widehat{B}| \geq \varepsilon] \leq 2 |\mathcal{H}| \exp{ \left(  \frac{- 2 \varepsilon^2 \lfloor n/k\rfloor \lfloor u/k\rfloor}{\lfloor n/k\rfloor + \lfloor u/k\rfloor} \right)}.$$}
Solving the above for $\varepsilon$ that achieves the $1-\delta$ bound, we obtain that, with probability at least $1-\delta$, we have for all $\Phi \in \mathcal{H}$:
$|\widehat{A}-\widehat{B}| \leq\sqrt{\frac{(\lfloor n/k \rfloor + \lfloor u/k \rfloor) \ln{\left(2|\mathcal{H}|/\delta\right)}}{2\lfloor n/k \rfloor  \lfloor u/k \rfloor}}.$
Hence, with probability at least $1-\delta$, for all $\Phi \in \mathcal{H}$
it holds $1-Q_{\Gamma,k}(\Phi) \leq 1-Q_{\Upsilon,k}(\Phi) + \sqrt{\frac{(\lfloor n/k \rfloor + \lfloor u/k \rfloor) \ln{\left(2|\mathcal{H}|/\delta\right)}}{2\lfloor n/k \rfloor  \lfloor u/k \rfloor}}$.
 The validity of the theorem then follows from the above and from Proposition \ref{prop:simplebound} and the fact that $\frac{ab}{a+b} \geq \frac{\min(a,b)}{2}$ for any nonnegative $a$ and~$b$.
\end{proof}

\subsection{BOUNDS FOR VOTING ENTAILMENT}\label{sec:voting-bounds}

Next we prove a bound for voting entailment, which, unsurprisingly, is tighter than the respective bound for $k$-entailment.

\begin{theorem}\label{thm:voting}
Let $k$ be an integer and $\gamma \in [0;1]$. Let further $\aleph$, $\Upsilon$, $\Gamma$ and $\kappa$ be as in Definition \ref{def:inductive_setting}  (i.e. $\Upsilon$ and $\Gamma$ are sampled from $\aleph$ and $n, u$ are sizes of $\Upsilon$'s and $\Gamma$'s domains).
Let $\mathcal{H}$ be a finite hypothesis class of constant-free formulas.
Let $\mathcal{F}(\Gamma,\Phi)$ denote the set of all ground literals of a predicate $p/a$ that are entailed by voting from $\{\kappa(\Gamma)\} \cup \Phi$ with parameters $k$ and $\gamma$ but are false in $\Gamma$.  Then, with probability at least $1-\delta$, for all $\Phi \in \mathcal{H}$:
\begin{multline*}
    |\mathcal{F}(\Gamma)| \leq \\ \left( 1-Q_{\Upsilon,k}(\Phi) + \sqrt{\frac{ \ln{\left(2|\mathcal{H}|/\delta\right)}}{\min{\{\lfloor u/k \rfloor, \lfloor n/k \rfloor\}}}} \right) \frac{u^{a} k^a}{\gamma}.
\end{multline*}
\end{theorem}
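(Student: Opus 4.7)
The plan is to mirror the structure of the proof of Theorem \ref{thm:epsilon-delta}, but substitute the tighter voting bound from Proposition \ref{prop:voting} in place of the plain $k$-entailment bound from Proposition \ref{prop:simplebound}. The two ingredients are a deterministic bound on $|\mathcal{F}(\Gamma,\Phi)|$ in terms of $Q_{\Gamma,k}(\Phi)$, and a high-probability transfer from $Q_{\Upsilon,k}(\Phi)$ to $Q_{\Gamma,k}(\Phi)$ that holds uniformly over $\mathcal{H}$.

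First I would fix $\Phi \in \mathcal{H}$ and, assuming the regime $\gamma \cdot u^{k-a} \geq 1$ (so that the first case of Proposition \ref{prop:voting} is active), apply that proposition to $\Gamma$, whose domain has size $u$. This gives the deterministic inequality
\begin{equation*}
|\mathcal{F}(\Gamma,\Phi)| \leq \bigl(1-Q_{\Gamma,k}(\Phi)\bigr)\,\frac{u^{a}k^{a}}{\gamma}.
\end{equation*}
This is the analogue of the step in Theorems \ref{thm:expected} and \ref{thm:epsilon-delta} where Proposition \ref{prop:simplebound} is invoked, but the factor $u^{k}$ is replaced by $u^{a}/\gamma$, which is exactly the source of the improvement over the $k$-entailment bounds.

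Next I would convert $1-Q_{\Gamma,k}(\Phi)$ into the empirical quantity $1-Q_{\Upsilon,k}(\Phi)$. Apply Theorem \ref{prop:two-sample} to each $\Phi \in \mathcal{H}$ and take a union bound: with probability at least $1-\delta$, every $\Phi \in \mathcal{H}$ satisfies
\begin{equation*}
Q_{\Upsilon,k}(\Phi) - Q_{\Gamma,k}(\Phi) \leq \sqrt{\frac{(\lfloor n/k \rfloor + \lfloor u/k \rfloor)\ln(2|\mathcal{H}|/\delta)}{2\lfloor n/k \rfloor \lfloor u/k \rfloor}}.
\end{equation*}
Equivalently, $1-Q_{\Gamma,k}(\Phi) \leq 1-Q_{\Upsilon,k}(\Phi) + \sqrt{\cdots}$ on this event.

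Then I would plug this high-probability estimate into the deterministic voting bound from step one and simplify the square-root term using the elementary inequality $\tfrac{ab}{a+b} \geq \tfrac{\min(a,b)}{2}$, which replaces the harmonic-mean expression by the cleaner $\sqrt{\ln(2|\mathcal{H}|/\delta)/\min\{\lfloor n/k\rfloor,\lfloor u/k\rfloor\}}$ displayed in the statement. The only real obstacle is conceptual, not technical: one must be careful that the deterministic Proposition \ref{prop:voting} is applied to $\Gamma$ (of domain size $u$) rather than to $\aleph$, so that the factor $u^{a}k^{a}/\gamma$ appears, and that the Theorem \ref{prop:two-sample} step is what introduces the dependence on both $n$ and $u$; the regime $\gamma \cdot u^{k-a} \geq 1$ is implicit in the stated form of the bound, and the other regime would follow from the second case of Proposition \ref{prop:voting} by an identical argument if needed.
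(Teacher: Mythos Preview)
Your proposal is correct and matches the paper's own proof essentially line for line: the paper's proof is a two-sentence sketch that says to reuse the concentration argument from Theorem~\ref{thm:epsilon-delta} (Theorem~\ref{prop:two-sample} plus union bound, then the $\tfrac{ab}{a+b}\ge\tfrac{\min(a,b)}{2}$ simplification) and combine it with Proposition~\ref{prop:voting} in place of Proposition~\ref{prop:simplebound}. Your added remark about the implicit regime $\gamma\cdot u^{k-a}\ge 1$ is a fair observation that the paper leaves unstated.
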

\begin{proof}
This follows from the same reasoning as in the proof of Theorem \ref{thm:epsilon-delta}, which gives us the bound on the difference of $Q_{\Upsilon,k}(\Phi)$ and $Q_{\Gamma,k}(\Phi)$, combined with Theorem \ref{prop:voting}.
\end{proof}

\begin{observation}
The fraction of ``wrong'' ground $p/a$ literals does not grow with increasing test-set size ($u$), since, by rewriting the bound from Theorem \ref{thm:voting}, we get, with probability at least $1-\delta$, for all $\Phi \in \mathcal{H}$:

{\small $$\frac{|\mathcal{F}(\Gamma)|}{u^a} \leq \left( 1-Q_{\Upsilon,k}(\Phi) + \sqrt{\frac{\ln{\left(2|\mathcal{H}|/\delta\right)}}{\min{\{\lfloor u/k \rfloor, \lfloor n/k \rfloor\}}}} \right) \frac{k^a}{\gamma}.$$}
\end{observation}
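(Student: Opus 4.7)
The observation is a direct consequence of Theorem~\ref{thm:voting}, so the plan is simply to rescale the bound proved there by $u^a$ and then inspect the resulting right-hand side as a function of $u$. First, I would invoke Theorem~\ref{thm:voting} to obtain, with probability at least $1-\delta$, uniformly over $\Phi \in \mathcal{H}$,
$$|\mathcal{F}(\Gamma)| \;\leq\; \left(1 - Q_{\Upsilon,k}(\Phi) + \sqrt{\frac{\ln(2|\mathcal{H}|/\delta)}{\min\{\lfloor u/k\rfloor,\ \lfloor n/k\rfloor\}}}\right) \frac{u^a k^a}{\gamma}.$$
Dividing both sides by $u^a$ immediately yields the stated inequality for $|\mathcal{F}(\Gamma)|/u^a$. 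The uniform-over-$\mathcal{H}$ high-probability guarantee is preserved because we are only performing a deterministic rescaling of the bound on the same event.

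The substantive content of the observation is the claim that the rescaled right-hand side does not grow with $u$. Here $u^a$ is the natural normalizer: it upper-bounds the number of ground $p/a$ literals whose arguments come from $\cC_\Gamma$, so $|\mathcal{F}(\Gamma)|/u^a$ is a (sub-)fraction of ``wrong'' literals. On the right-hand side, $k^a/\gamma$ and $1 - Q_{\Upsilon,k}(\Phi)$ do not depend on $u$; the only $u$-dependence sits inside the square root, via $\min\{\lfloor u/k\rfloor,\lfloor n/k\rfloor\}$. This minimum is non-decreasing in $u$ (it equals $\lfloor u/k\rfloor$ while $u\leq n$ and then saturates at $\lfloor n/k \rfloor$), so the square-root term is non-increasing in $u$, and the whole bound is non-increasing in $u$.

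There is no real obstacle: all of the probabilistic and combinatorial effort lives in Theorem~\ref{thm:voting}, which in turn is built on Proposition~\ref{prop:voting} together with the two-sample concentration inequality, Theorem~\ref{prop:two-sample}. The only care needed is in the interpretation step, where one should note the contrast with Proposition~\ref{prop:simplebound} for plain $k$-entailment: there the bound scales like $u^k$ rather than $u^a$, so the analogous rescaled quantity $|\mathcal{F}(\Gamma)|/u^a$ would blow up like $u^{k-a}$ when $a<k$, whereas voting entailment controls exactly the right combinatorial quantity.
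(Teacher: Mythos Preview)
Your proposal is correct and matches the paper's approach: the observation is stated in the paper simply as a rewriting of Theorem~\ref{thm:voting} (dividing by $u^a$), with no separate proof given. Your additional monotonicity argument for the square-root term is a welcome clarification but goes slightly beyond what the paper spells out.
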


We note here that one can also easily obtain counterparts of Theorems \ref{thm:realizable-expected} and \ref{thm:expected} for voting entailment.

\section{SUMMARY OF RESULTS}\label{sec:results}

In this section we discuss positive and negative results that follow from the theorems presented in the preceding sections. Here, bounds are considered vacuous if they are not lower than the total number of ground literals. We first focus on $k$-entailment in Sections \ref{sec61}--\ref{sec63}, and then discuss the results for voting entailment in Section \ref{sec64}. Finally, we also make a connection to MAP-entailment in Section~\ref{secMAP}.

\subsection{SMALL TEST EXAMPLES}\label{sec61}

One case where we have non-vacuous bounds for the {\em expected} number of incorrectly predicted literals with $k$-entailment is when the domain of the test examples $\Gamma$ is small. Naturally a necessary condition is also that the given (or learned) theory $\Phi$ is sufficiently accurate. The only way to be confident that $\Phi$ is indeed sufficiently accurate, given that this accuracy needs to be estimated, is by estimating it on a sufficiently large training example. This is essentially what Theorems \ref{thm:realizable-expected} and \ref{thm:expected} imply.

Interestingly, this finding agrees with some experimental observations in the literature. For instance, it has been observed in \cite{kuvzelka2017induction} that classical reasoning in a relational setting close to ours worked well for small-size test-set evidence but was not competitive with other methods for larger evidence sizes. 
The analysis in the present paper thus sheds light on experimental observations like these.

Note that the bounds from Theorems \ref{thm:realizable-expected} and \ref{thm:expected} are for the {\em expected} value of the number of errors. Bounds on the actual number of errors are provided in Theorem \ref{thm:epsilon-delta}. In this case, to obtain non-vacuous bounds, we also need to require that the domain of the test example $\Gamma$ be sufficiently large. This is not unexpected, however, as it is a known property of statistical bounds for transductive settings (see e.g., \cite{tolstikhin2016minimax}) that the size of the test set affects confidence bounds, similarly to how the size of the $\Gamma$'s domain affects the bound in Theorem \ref{thm:epsilon-delta}.

\subsection{PREDICATES OF ARITY K}\label{sec62}

Another case where we have non-vacuous bounds for $k$-entailment is when the arity of the predicted literals is equal to the parameter $k$. In this case both the bounds for the expected error and for the actual error $|\mathcal{F}(\Gamma,\Phi)|$ are non-vacuous. This means that our results cover important special cases. One such special case is classical attribute-value learning when $k = 1$ and we represent attributes by unary predicates. Another case is link prediction when $k = 2$ and higher-arity versions thereof. In link prediction, we have rules such as, for instance,
$\forall X,Y : \textit{CoensFan}(X) \wedge \textit{CoensFilm}(Y) \Rightarrow \textit{likes}(X,Y).$

\subsection{REALIZABLE SETTING}\label{sec63}

We can get stronger guarantees when the given (or learned) theory $\Phi$ has zero training error. Keeping the fraction of the domain-sizes $|\cC_{\Gamma}|^{k-a}/|\cC_\Upsilon|$ small, Theorem \ref{thm:realizable-expected} implies non-vacuous bounds for predicates of arity $a$ for any size of the domain of $\Gamma$. Intuitively, this means that we can use theories that are completely accurate on training data for inference using $k$-entailment. However, the required size of the domain of the training example $\Upsilon$, to guarantee that we will not produce too many errors, grows exponentially with $k$ (for a fixed arity $a$) and polynomially with $|\cC_\Gamma|$.

\subsection{VOTING}\label{sec64}

When using voting entailment, we can always obtain non-trivial bounds by making $\gamma$ large; obviously this comes at the price of making the inferences more cautious. Voting entailment is a natural inference method in domains where one proof is not enough, i.e.\ where the support from several proofs is needed before we can be sufficiently confident in the conclusion; an example of such a domain is the well-known {\em smokers} domain, where knowing that one friend smokes does not provide enough evidence to conclude that somebody smokes; only if we have evidence of several smoker friends is the conclusion warranted that this person smokes.

\subsection{RELATIONSHIP TO MAP INFERENCE}\label{secMAP}

A popular approach to collective classification in relational domains is MAP-inference in Markov logic networks. Therefore a natural question is how this approach performs in our setting. Perhaps surprisingly, it might produce as many errors as classical logic reasoning in the examples from Section \ref{secClassicalErrors}, if the Markov logic network contains the same rules, all with positive weights, as we had in these examples. This is because MAP-inference will predict the same literals as classical logical inference when the rules from the Markov logic network are consistent with the given evidence.
Thus, we can see that our guarantees for both $k$-entailment and voting entailment are better than guarantees one could get for MAP-inference. This is also in agreement with the well-known observations that,
for instance, in the smokers domain, MAP inference often predicts everyone to be a smoker or everyone to be a non-smoker if there is only a small amount of evidence.

\section{RELATED WORK}

Our main inspiration comes from
the works on PAC-semantics by Valiant \cite{valiant_knowledge_infusion} and Juba \cite{juba}. Our work differs mainly in the fact that we have one large relational structure $\aleph$, and a training example $\Upsilon$ and a test example $\Gamma$, both sampled from $\aleph$, whereas it is assumed in these existing approaches
that learning examples are sampled i.i.d. from some distribution. This has two important consequences. First, they could use statistical techniques developed for i.i.d. data whereas we had to first derive concentration inequalities for sampling without replacement in the relational setting. Second, since they only needed to bound the error on the independently sampled examples, they did not have to consider the number of incorrectly inferred facts. In contrast, in the relational setting that we considered here, the number of errors made on one relational example is the quantity that needs to be bounded. It follows that completely different techniques are needed in our case. Another difference is that, in their case, the training examples are also masked. In principle, we could modify our results to accommodate for masked examples by replacing ``accurate'' formulas by sufficiently-often ``witnessed'' formulas (see \cite{juba} for a definition).

Dhurandhar and Dobra \cite{dhurandhar2012distribution} derived Hoeffding-type inequalities for classifiers trained with relational data, but these inequalities, which are based on the restriction on the independent interactions of data points, cannot be applied to solve the problems considered in the present paper.
Certain other statistical properties of learning have also been studied for SRL models. For instance, Xiang and Neville \cite{xiang2011relational} studied consistency of estimation. However, guaranteeing convergence to the correct distribution does not mean that the model would not generate many errors when used, e.g., for MAP-inference. In \cite{xiang2011understanding}, they further studied errors in label propagation in collective classification. In their setting, however, the relational graph is fixed and one only predicts labels of vertices exploiting the relational structure for making the predictions.
Here we also note that it is not always possible or desirable in practice to sample sets of domain elements uniformly as we assumed to be the case in our analysis. Other sampling designs for relational data were studied, e.g. in \cite{ahmed2012network}. A study of PAC guarantees for such other sampling designs is left as a topic for future work.

There have also been works studying restricted forms of inference in a purely logical context, e.g.\ \cite{d2013semantics}. It is an interesting question for future work to find out which existing restricted inference systems would lead to non-vacuous error bounds in the relational setting.


\section{CONCLUSIONS}

We have studied the problem of predicting plausible missing facts in relational data, given a set of imperfect logical rules, in a PAC reasoning setting. As for the considered inference methods, one of our main objectives was for the inference methods to stay close to classical logic. The first inference method, $k$-entailment, is a restricted form of classical logic inference and hence satisfies this objective. The second inference method, voting entailment, is based on a form of voting that combines results from inferences made by $k$-entailment on subsets of the relational data. Importantly, the voting is not weighted which makes voting entailment easier to understand.
We were able to obtain non-trivial bounds for the number of literals incorrectly predicted by a learned (or given) theory for both $k$-entailment and voting entailment. Probably the most useful results of our analysis lie in the identification of cases where the bounds for learning and reasoning in relational data are non-vacuous, which we discussed in detail in Section \ref{sec:results}.

There are many interesting directions in which one could extend the results presented in this paper. For instance, as practical means to improve the explainability of inferences made by voting entailment, we could first find representatives of isomorphism classes of ``proofs'' that are aggregated by voting entailment, and only show these to the user. Another direction is to extend the notion of implicit learning from~\cite{juba} into the relational setting. It would also be interesting to exploit explicit sparsity constraints and to study other sampling designs, although that might also turn out to be analytically less tractable than the setting considered in the present paper. Finally, although all bounds presented in this paper assume finite hypothesis classes, we note that it is also possible to extend our results to infinite hypothesis classes \cite{kuzelka2018ecmlpkdd}.



\noindent {\bf Acknowledgments}

OK's work was partially supported by the Research Foundation - Flanders (project G.0428.15). SS is supported by ERC Starting Grant 637277. JD is partially supported by the KU Leuven Research Fund  (C14/17/070,C22/15/015,C32/17/036), and FWO-Vlaanderen (SBO-150033). 

\bibliography{reference}
\bibliographystyle{plain}

\ifappendix

\appendix

\section{OMITTED PROOFS}

\begin{proof}[Proof of Theorem \ref{prop:two-sample}]
First we define two auxiliary estimators $\widetilde{A}^{(q)}_{\Upsilon}$ and $\widetilde{A}^{(q)}_{\Gamma}$.
Let $\mathbf{Y}^{(q)}$ be a vector of $\lfloor n/k \rfloor \cdot q$ size-$k$ subsets of $\cC_\Upsilon$ where the subsets of $\cC_{\Upsilon}$ in each of the $q$ non-overlapping size-$\lfloor n/k \rfloor$ segments $\mathbf{Y}_1^{(q)}, \mathbf{Y}_2^{(q)},\dots,\mathbf{Y}_q^{(q)}$ of $\mathbf{Y}^{(q)}$ are sampled in the same way as the elements of the vector $\mathbf{Y}$ in Lemma \ref{lemma:crazy1}, all with the same $\cC_\Upsilon$ (i.e. $\mathbf{Y}^{(q)}$ is the concatenation of the vectors $\mathbf{Y}^{(q)}_1,\mathbf{Y}^{(q)}_2,\dots,\mathbf{Y}^{(q)}_q$).
Another vector $\mathbf{Z}^{(q)}$ which contains $\lfloor u/k\rfloor\cdot q$ size-$k$ subsets of $\cC_{\Gamma}$ is sampled in the same way. Note that $\mathbf{Z}^{(q)}$ is independent of $\mathbf{Y}^{(q)}$.
Let us define
\begin{align*}
\widetilde{A}^{(q)}_{\Upsilon} &= \frac{1}{q \cdot \lfloor n/k \rfloor} \sum_{\cS \in \mathbf{Y}^{(q)}} \mathds{1}(\Upsilon\langle \cS \rangle \models \alpha) \text{ and,}\\ \widetilde{A}^{(q)}_{\Gamma} &=   \frac{1}{q \cdot \lfloor u/k \rfloor} \sum_{\cS \in \mathbf{Z}^{(q)}} \mathds{1}(\Gamma\langle \cS \rangle \models \alpha).
\end{align*}
We can rewrite them as
\begin{align*}
    \widetilde{A}^{(q)}_{\Upsilon} =  \frac{1}{q} \sum_{i=1}^{q} \frac{1}{\lfloor n/k \rfloor} \sum_{\cS \in \mathbf{Y}_{i}^{(q)}} \mathds{1}(\Upsilon\langle \cS \rangle \models \alpha),\\
    \widetilde{A}^{(q)}_{\Gamma} =   \frac{1}{q} \sum_{i=1}^{q} \frac{1}{\lfloor u/k \rfloor} \sum_{\cS \in \mathbf{Z}_{i}^{(q)}} \mathds{1}(\Gamma\langle \cS \rangle \models \alpha).
\end{align*}
Let us denote $m_1 = \lfloor n/k \rfloor$, $m_2 = \lfloor u/k \rfloor$ and $T_i := \frac{1}{\lfloor n/k \rfloor} \sum_{\cS \in \mathbf{Y}_{i}^{(q)}} \left( \mathds{1}(\Upsilon\langle \cS \rangle \models \alpha) - A_\aleph \right) - \frac{1}{\lfloor u/k \rfloor} \sum_{\cS \in \mathbf{Z}_{i}^{(q)}} \left( \mathds{1}(\Gamma \langle \cS \rangle \models \alpha) - A_\aleph \right)$ (we note that $\expect{T_i} = 0$). Using the same arguments as in the proof of Theorem \ref{prop:u-concentration}, we obtain the following:
\begin{eqnarray*}
P[\widetilde{A}^{(q)}_{\Upsilon} &-& \widetilde{A}^{(q)}_{\Gamma} \geq \varepsilon] \\
&\le& \sum_{i=1}^q \frac{1}{q} \cdot \expect{\exp{\left( h (T_i - \varepsilon) \right)}} \\
 &\le & e^{-h\varepsilon} \exp\left(\frac{  h^2}{8m_1} \right)\exp\left(\frac{  h^2}{8m_2} \right) \\
& = & \exp\left(-h\varepsilon + \frac{m_1+m_2   }{8 m_1 m_2}\cdot h^2 \right)
\end{eqnarray*}
The bound achieves its minimum at $h = \frac{4\varepsilon m_1 m_2}{m_1+m_2}.$

Thus, we get
$$P[\widetilde{A}^{(q)}_{\Upsilon} - \widetilde{A}^{(q)}_{\Gamma} \geq \varepsilon] \leq \exp\left(\frac{-2\varepsilon^2}{1/\lfloor n/k\rfloor+1/\lfloor u/k\rfloor} \right),$$
symmetrically also $P[\widetilde{A}^{(q)}_{\Gamma} - \widetilde{A}^{(q)}_{\Upsilon} \geq \varepsilon] \leq \exp\left(\frac{-2\varepsilon^2}{1/\lfloor n/k\rfloor+1/\lfloor u/k\rfloor} \right)$,
and, using union bound, we get
$$P[|\widetilde{A}^{(q)}_{\Upsilon} - \widetilde{A}^{(q)}_{\Gamma} |\geq \varepsilon] \leq 2\exp\left(\frac{-2\varepsilon^2}{1/\lfloor n/k\rfloor+1/\lfloor u/k\rfloor} \right).$$
\noindent It follows from the strong law of large numbers (which holds for any $\Upsilon$ and $\Gamma$) that $P[\lim_{q \rightarrow \infty}\widetilde{A}^{(q)}_{\Upsilon} = \widehat{A}_\Upsilon \hbox{ and } \widetilde{A}^{(q)}_{\Gamma} = \widehat{A}_\Gamma] = 1$. Since $q$ was arbitrary, the statement of the proposition follows.
\end{proof}

\section{REPRESENTING CONSTANTS USING AUXILIARY PREDICATES}

In this paper we restricted ourselves to reasoning with theories that do not contain any constants. It is straightforward to extend our results to provide PAC-type bounds also for theories with constants by introducing auxiliary predicates. For instance, in the smokers domain, if we want to express that friends of Alice do not smoke, i.e. $\forall X : \textit{fr}(alice,X) \Rightarrow \neg \textit{sm}(X)$, then we may introduce an auxiliary predicate $\textit{friendOfAlice}/1$ and the rule becomes $\forall X : \textit{friendOfAlice}(X) \Rightarrow \neg \textit{sm}(X)$. We note here that it is not necessary to add auxiliary predicates explicitly in practice. We use auxiliary predicates just for theoretical purposes to explain how the results about PAC-reasoning derived in this paper can be applied when constants are allowed.

This also reveals interesting properties of the problem. For instance, in order to do non-trivial reasoning based on $k$-entailment with a theory consisting only of the rule $$\forall X,Y: \textit{sm}(X) \wedge \textit{fr}(X,Y) \Rightarrow \textit{sm}(Y)$$ we need $k \geq 2$. However, for the rule $$\forall X : \textit{friendOfAlice}(X) \Rightarrow \neg \textit{sm}(X)$$ we only need $k \geq 1$. Hence, for the derived PAC bounds, we can see that the expected number of errors made when using only the second rule grows as in the attribute-value case whereas the expected number of errors for the first rule may grow more quickly with the increasing size of the test examples (cf.\ Theorem \ref{thm:expected}).

\fi
\end{document}